\newcommand{\yuyan}[1]{[\textcolor{cyan}{Yuyan: {#1}}]}
\def\eqref#1{equation~\ref{#1}}
\def\1{\bm{1}}
\DeclareMathAlphabet{\mathsfit}{\encodingdefault}{\sfdefault}{m}{sl}
\SetMathAlphabet{\mathsfit}{bold}{\encodingdefault}{\sfdefault}{bx}{n}
\newtheorem{pro}{Proposition}
\theoremstyle{remark}
\def\L{\mathcal{L}}
\def\Lhat{\hat{\mathcal{L}}}
\def\Y{\mathcal{Y}}
\newcommand{\ltwonorm}[1]{\lVert#1\rVert_2}
\title{Small Towers Make Big Differences}
\author{%
  Yuyan Wang, Zhe Zhao, Bo Dai, Christopher Fifty, Dong Lin, Lichan Hong, Ed H. Chi \\
  Google Research, Brain Team\\
  \texttt{\{yuyanw, zhezhao, bodai, cfifty, dongl, lichan, edchi\}@google.com}
}
\begin{document}

\maketitle

\begin{abstract}


Multi-task learning aims at solving multiple machine learning tasks at the same time. A good solution to a multi-task learning problem should be generalizable in addition to being Pareto optimal. In this paper, we provide some insights on understanding the trade-off between Pareto efficiency and generalization as a result of parameterization in multi-task deep learning models. As a multi-objective optimization problem, enough parameterization is needed for handling task conflicts in a constrained solution space; however, from a multi-task generalization perspective, over-parameterization undermines the benefit of learning a shared representation which helps harder tasks or tasks with limited training examples. A delicate balance between multi-task generalization and multi-objective optimization is therefore needed for finding a better trade-off between efficiency and generalization. To this end, we propose a method of \textit{under-parameterized self-auxiliaries} for multi-task models to achieve the best of both worlds. It is task-agnostic and works with other multi-task learning algorithms. Empirical results show that \textit{small towers} of under-parameterized self-auxiliaries can make \textit{big differences} in improving Pareto efficiency in various multi-task applications.



\end{abstract}

\section{Introduction}
\label{introduction}
In many machine learning applications, there are more than one task that is of interest. For example, an object detection algorithm may involve predicting both type and position of an object \citep{girshick2015fast}; a content recommendation system may care about both short-term conversion rate and long-term retention probabilities of the users \citep{zhao2019recommending}. These use cases require the prediction of multiple targets based on shared inputs, the solution of which is multi-task learning \citep{caruana1997multitask}. Over the past years, multi-task deep learning has gained popularity through its success in a wide range of applications, including natural language processing \citep{collobert2008unified}, computer vision \citep{girshick2015fast, ren2015faster}, and online recommendation systems \citep{bansal2016ask, ma2018modeling, ma2019snr}. 

A common modeling approach for multi-task learning is to design a parameterized model class that shares a subset of parameters across different tasks. The benefits of such a shared architecture are multi-fold. First, it exploits task relatedness with inductive bias learning \citep{caruana1997multitask, baxter2000model}. Assuming that tasks share a common hypothesis class, learning a shared representation across tasks is beneficial especially for harder tasks or tasks with limited training examples. Secondly, by forcing tasks to share model capacity, it introduces a regularization effect and improves generalization. Last but not least, it offers a much more compact and efficient model architecture compared with training each task separately, which unlocks potential in training and serving multiple tasks for large-scale systems. 

However, such a shared model architecture also introduces new challenges for learning. With different and potentially conflicting tasks as multiple objectives, it is unlikely in practice that all objectives achieve optimality at the same time \citep{sener2018multi, ma2018modeling}. In other words, multi-task learning naturally comes with trade-off between the performances on different tasks. Therefore, the solution to a multi-task learning problem is usually not a single solution, but rather, a set of solutions representing different trade-off decisions. Those solutions are not dominated by any others and are said to be Pareto optimal. Multi-task model performance can be evaluated by a set of Pareto optimal solutions, which form the Pareto frontier. Recent multi-task learning research has been focusing on better optimizing conflicting objectives to improve Pareto efficiency. Existing efforts include designing more flexible model architecture \citep{ruder2017overview} to handle task trade-offs and reduce task conflicts, as well as developing efficient optimization algorithms \citep{sener2018multi, chen2017gradnorm}.

In this paper, we empirically show that by increasing parameterization, multiple training objectives can have less conflicts. However, too much over-parameterization could yield solutions that are worse than those from smaller models. We show that on one hand, from the perspective of multi-objective optimization \citep{sawaragi1985theory}, enough parameterization is needed for properly handling task conflicts in a constrained solution space. On the other hand, over-parameterization diminishes the benefit of inductive transfer from multi-task deep learning and can lead to poor per-task performances.While bigger model has less conflicts due to co-training, it also has less benefits in terms of multi-task generalization.

To balance Pareto efficiency and generalization and achieve the best of both worlds, we propose a method of adding \textit{under-parameterized self-auxiliaries} for multi-task deep learning models. Our proposed method regularizes the learning of shared parameters through additional small towers on the same tasks. It is task-agnostic and can be combined with existing multi-task learning algorithms. 

The paper is organized as follows. In Section \ref{related_work}, we discuss related works in multi-objective optimization for multi-task learning and state-of-the-art methods for improving Pareto efficiency. In Section \ref{understand_pf}, we share our empirical understanding on the Pareto frontier and parameterization trade-offs for multi-task learning. Motivated by such understanding, we propose our method of under-parameterized self-auxiliaries in Section \ref{method}. Finally, we conduct experiments on three benchmark datasets in Section \ref{experiments}, and demonstrate the effectiveness of the proposed method on both regression and classification tasks in real world multi-task applications.

\section{Related Work}
\label{related_work}
\textbf{Multi-task learning as multi-objective optimization.} 
Given the model space, minimizing task losses in multi-task learning can be viewed a multi-objective optimization problem \citep{sener2018multi}. The notion of Pareto optimality and Pareto efficiency was first proposed and studied in multi-objective optimization theories \citep{sawaragi1985theory}. In addition to linear weighting of task losses which are commonly used for multi-task learning problems, examples of other multi-objective optimization methods \citep{miettinen2012nonlinear} include constraint methods, goal programming \citep{jones2010practical}, exponential weighted sum \citep{yu1974cone, athan1996note}, population methods \citep{schaffer1985multiple}, preference elicitation \citep{conitzer2009eliciting}, among many more \citep{kochenderfer2019algorithms}. There is also research on multi-objective optimization methods where the objectives are nonconvex \citep{pardalos2017non} or the Pareto frontier is nonconvex \citep{ghane2015new}. 

Despite the close relationship between multi-task learning and multi-objective optimization, there also exist gaps between them. For example, multi-objective optimization barely looks into the generalization and optimization issue for nonconvex optimization of deep neural networks \citep{zhang2016understanding}, which is a main challenge for multi-task learning problems \citep{chen2017gradnorm}. An example of the recent works \citep{sener2018multi, lin2019pareto} toward bridging this gap is the application of multiple-gradient descent algorithm \citep{desideri2012multiple} to multi-task learning, which is a gradient-based multi-objective optimization method.

Inspired by these explorations, our work starts by empirically understanding the Pareto frontiers of multi-task learning problems from a multi-objective optimization perspective. We find that trade-off exists with different parameterizations as they lead to different training and generalization difficulties. This is rarely discussed in multi-objective optimization literature. Based on our findings, we then propose a simple yet effective treatment to balance the Pareto efficiency improvements from over-parameterization and the generalization benefits from learning multiple tasks jointly.  

\textbf{Improving Pareto efficiency for multi-task deep learning.} Recent research on improving Pareto efficiency for multi-task deep learning can be mainly grouped into three lines of efforts. The first line aims at improved model architecture for more flexible parameter sharing to deal with task conflicts. Examples include soft parameter-sharing architectures that encourage more sharing for similar tasks and less for conflicting tasks \citep{misra2016cross, hashimoto2016joint, ma2018modeling}, adaptively and dynamically deciding which layers to share for which tasks during the training process \citep{lu2017fully, vandenhende2019branched}, or on a higher level, deciding which tasks should be learned together \citep{standley2019tasks}. The second line of research is on optimization algorithms that improve optimization and land on better local optima on the nonconvex loss surface. These works mainly focus on adaptive linear weighting approaches \citep{kendall2018multi, chen2017gradnorm, yu2020gradient, Dosovitskiy2020You} that find better solutions than the vanilla linear weighting method. The third line of research lies in adding auxiliary tasks to improve the performance of main tasks. Auxiliary tasks relate to the main tasks so that jointly predicting them will benefit the main tasks. This line of research has been widely applied to computer vision \citep{zhang2014facial}, natural language processing \citep{arik2017deep} and information retrieval \citep{liu2015representation}. If related tasks are unavailable, auxiliary tasks can also be constructed using adversarial loss \citep{ganin2014unsupervised}, predicting inputs or past labels \citep{caruana1997promoting, caruana1997multitask}, pseudo-task augmentation \citep{meyerson2018pseudo} or learning representations \citep{rei2017semi}. 

Our method of under-parameterized self-auxiliaries can be viewed as along the line of auxiliary tasks. But unlike the auxiliary tasks studied in existing literature, our method does not require any specific domain knowledge on designing auxiliary tasks. The auxiliary tasks in our case is self-auxiliary, in the sense that they are learning the same tasks but with different parameterizations. A similar idea along this line is knowledge distillation \citep{hinton2015distilling, anil2018large}, but our method differs from distillation and existing distillation works in multi-task learning \citep{liu2019improving} in two significant ways: (1) instead of having a small network (student) to learn the predictions of a bigger network (teacher), we let both networks learn exactly the same task; (2) instead of first training the bigger network and then the smaller network, we co-train both networks which share the same labels and learned representations.

\section{Understanding the Parameterization Effect for Multi-Task Learning}
\label{understand_pf}
Suppose there are $T$ tasks sharing an input space $\mathcal{X}$. Each task has its own task space ${\{\Y^t\}}_{i=1}^T$ . A dataset of n i.i.d. examples from the input and task spaces is given by ${\{(x_i, y^1_i, ... ,y^T_i)\}}_{i=1}^n$, where $y^t_i$ is the label of the $t$-th task for example $i$. We assume a multi-task model parameterized by $\theta\in\Theta$. $\theta = (\theta_{sh}, \theta_1,...,\theta_T)$ includes shared-parameters $\theta_{sh}$ and task-specific parameters $\theta_1,...,\theta_T$. Let $f_t(\cdot, \cdot): \mathcal{X}\times\Theta \rightarrow \mathcal{Y}^t$ be the model function and $\L_t(\cdot, \cdot): \mathcal{Y}^t\times\mathcal{Y}^t \rightarrow \mathbb{R}^{+}$ be the loss function for the $t$-th task. This formulation also includes the more general multi-task learning setting where different tasks have different inputs, in which case $x_i = (x^1_i, ... ,x^T_i )^T$ where $x^t_i$ is the input of the $t$-th task for example $i$.

Let $\Lhat_t(\theta) \coloneqq \frac{1}{n}\sum_{i=1}^n \L_t(f_t(x_i;\theta_{sh}, \theta_t), y^t_i)$ be the empirical loss for task $t$, where we drop the dependency on $x$ and $y$ for ease of notation. The optimization for multi-task learning can then be formulated as a joint optimization of a vector-valued loss function:
\begin{equation}
\label{eqn:3.1}
\min_{\theta} (\Lhat_1(\theta), ..., \Lhat_T(\theta))^\top.
\end{equation}


It is unlikely that a single $\theta$ optimizes all objectives simultaneously. The solution to (\ref{eqn:3.1}) is therefore a set of points which represent different trade-off preferences. More formally, solution $\theta_a$ is said to dominate solution $\theta_b$ if $\Lhat_t(\theta_a) \leq \Lhat_t(\theta_b), \forall t $ and there exist at least one task $j$ such that the inequality is strict. A solution $\theta$ is called Pareto optimal if there is no solution $\theta' \neq \theta$ such that $\theta'$ dominates $\theta$. Pareto frontier is the set of all Pareto optimal solutions.


For linear weighting method, the minimization objective is a scalarization of the empirical loss vector $\Lhat(\theta)\coloneqq\sum_{t=1}^T w_t\Lhat_t(\theta)$, where $\{w_t\}_{t\in\{1,...,T\}}$ are weights for individual tasks. However, linear weighting method can only obtain solutions to the above minimization problem in the convex region of the Pareto frontier. Will this be a challenge to multi-task deep learning models? In addition, over-parameterization for single-task deep learning models is almost always desirable \citep{belkin2019reconciling}. Is this also the case with multi-task deep learning models?


\textbf{Benefits from large models.} We first look into the convexity of the Pareto frontier of a multi-task learning model, which determines whether the use of linear weighting method is legitimate. When all objectives are convex in their respective parameters, the Pareto frontier is guaranteed to be convex. 
\begin{pro}
\label{prop1}
Suppose $\L_t(\theta)$ is convex and continuous in $\theta$ for all tasks $t \in \{1,...,T\}$ and $\Theta$ is convex. Then the Pareto frontier of $(\Lhat_1(\theta),..., \Lhat_T(\theta))^\top$ in problem (\ref{eqn:3.1}) is convex.
\end{pro}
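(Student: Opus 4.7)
The plan is to work in objective space and show that the achievable loss set, augmented by the nonnegative orthant, is convex; the Pareto frontier then lies on the minimal boundary of this convex set, which is the standard formalization of a ``convex Pareto frontier'' (and is equivalent to every Pareto point being recoverable via nonnegative linear scalarization).

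First I would introduce the achievable loss set $Y \coloneqq \{(\Lhat_1(\theta),\dots,\Lhat_T(\theta))^\top : \theta \in \Theta\} \subseteq \mathbb{R}^T$ and its upper image $Y^{\uparrow} \coloneqq Y + \mathbb{R}^T_{+}$. By definition, $\theta$ is Pareto optimal iff its loss vector is a minimal element of $Y$ under the componentwise partial order, equivalently a minimal element of $Y^{\uparrow}$. Convexity of the Pareto frontier is standardly defined as convexity of $Y^{\uparrow}$: this is the property guaranteeing that each Pareto point is the minimizer of some nonnegative linear combination $\sum_t w_t \Lhat_t(\theta)$, which is exactly what the question of using linear weighting hinges on.

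Second, I would verify that $Y^{\uparrow}$ is convex by a direct computation. Pick $u,v \in Y^{\uparrow}$, so $u = (\Lhat_1(\theta_a),\dots,\Lhat_T(\theta_a))^\top + p$ and $v = (\Lhat_1(\theta_b),\dots,\Lhat_T(\theta_b))^\top + q$ for some $\theta_a,\theta_b \in \Theta$ and $p,q \in \mathbb{R}^T_{+}$. For $\lambda \in [0,1]$, convexity of $\Theta$ yields $\theta_c \coloneqq \lambda \theta_a + (1-\lambda)\theta_b \in \Theta$. Since $\Lhat_t(\theta) = \tfrac{1}{n}\sum_i \L_t(f_t(x_i;\theta),y_i^t)$ is a finite nonnegative average of functions that are convex in $\theta$ by hypothesis, it is itself convex in $\theta$, so $\Lhat_t(\theta_c) \leq \lambda \Lhat_t(\theta_a) + (1-\lambda)\Lhat_t(\theta_b)$ for every $t$. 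Hence $\lambda u + (1-\lambda) v = (\Lhat_1(\theta_c),\dots,\Lhat_T(\theta_c))^\top + r$ for some $r \in \mathbb{R}^T_{+}$, so $\lambda u + (1-\lambda) v \in Y^{\uparrow}$.

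Third, I would conclude: $Y^{\uparrow}$ is convex, and continuity of each $\L_t$ (together with the structural assumptions on $\Theta$) ensures $Y^{\uparrow}$ is well-behaved enough that a standard supporting-hyperplane argument applies to its minimal points. Every minimal point is then the minimizer of a linear functional with normal in $\mathbb{R}^T_{+}$, i.e.\ reachable by linear weighting with $w \geq 0$, which is what we mean by the Pareto frontier being convex.

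The main obstacle is conceptual rather than computational: one has to be precise about what ``convex Pareto frontier'' means, since the frontier itself is typically a lower-dimensional boundary and not literally a convex set. The cleanest route is to promote the question to convexity of the upper image $Y^{\uparrow}$, after which the convexity check is essentially immediate from convexity of each $\Lhat_t$ (inherited from $\L_t$ as a nonnegative average) and convexity of $\Theta$.
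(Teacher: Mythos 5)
Your proposal is correct and follows essentially the same route as the paper's proof: the key step in both is that convexity of $\Theta$ puts $\lambda\theta_a+(1-\lambda)\theta_b$ in the feasible set, and convexity of each $\Lhat_t$ (inherited from $\L_t$) shows its loss vector dominates the corresponding convex combination of loss vectors, so chords between frontier points lie above the frontier. Your packaging via the upper image $Y^{\uparrow}=Y+\mathbb{R}^T_{+}$ is a cleaner formalization of what ``convex Pareto frontier'' means than the paper's informal ``the line lies above the curve,'' but the underlying argument is identical.
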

\addtocounter{pro}{-1}

\begin{wrapfigure}{r}{0.32\linewidth}
\centering
\includegraphics[width=0.32\textwidth]{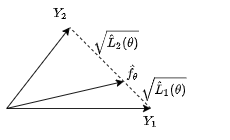}
\caption{Training loss trade-off for over-parameterized fully-shared multi-task model.}
\label{fig3.2}
\end{wrapfigure}
\vspace{-5pt}
See Appendix \ref{sec:proof} for proof. When some or all objectives are nonconvex, it is unlikely that the Pareto frontier remains convex. But there is still something to say about the shape of the Pareto frontier. We start with the case where all tasks are forced to share all parameters, i.e. $\theta=\theta_{sh}$. Consider square loss for regression tasks for ease of visualization. Let $Y_1=(y_1^1,...,y_n^1)^\top, Y_2=(y_1^2,...,y_n^2)^\top$ and $\hat{f_{\theta}}=(\hat{f}(x_1),...,\hat{f}(x_n))^\top$. Then $\Lhat_t(\theta)$ is the squared Euclidean distances between $Y_t $ and $\hat{f_{\theta}}$. Now assume that $f_{\theta}$ is over-parameterized enough so that $\hat{f_{\theta}}$ is able to \textit{fully} populate the $n$-dimensional space. In other words, for any $Y=(y_1,...,y_n)$, there exists $\theta \in \Theta$ such that $\L(f_{\theta}(x_i), y_i)=0, \forall i$. In this case, it is obvious to see that Pareto optimality is obtained when $\hat{f_{\theta}}$ is a linear combination of $Y_1$ and $Y_2$, as shown in Figure \ref{fig3.2}, where $\sqrt{\Lhat_1(\theta)} + \sqrt{\Lhat_2(\theta)} = \ltwonorm{Y_1-Y_2}$. Therefore the Pareto frontier is convex. Similar arguments can be made with any convex loss function.

When task-specific parameters are allowed, over-parameterized multi-task models can achieve zero training loss. In this case the Pareto frontier is an orthant, which is also convex. Note that the the Pareto frontier discussed above is the optimal training loss value considering all possible $f_{\theta} \in \mathcal{H}$, without considering optimization error or generalization error. However this provides some justification for using linear weighting methods for over-parameterized multi-task learning models. In addition, from the perspective of multi-objective optimization, task conflicts are reflected as the trade-off among task objectives over a constrained solution space. Over-parameterization enables better handling of task conflicts. 

\textbf{Challenges inherent in large models.} In order to understand whether bigger models always lead to better multi-task performance, We perform a series of studies on synthetic datasets. Similar to the setup in \cite{finn2017model} and \cite{ma2018modeling}, we generate a multi-task dataset and define each task as a regression from the input to the output of a combination of sine waves. To introduce task conflicts together with task correlation, we let the two tasks share a small subset of frequencies. A shared-bottom model with fully connected ReLU layers is used. A full description of the synthetic dataset and model architectures is available in Appendix \ref{sec:sythetic-dataset-description}. In order to observe the parameterization effect, we plot the Pareto frontier of the test losses with different model capacities. Figure \ref{fig:4.2a} shows that with more hidden layers added to task-specific towers, the Pareto frontier first improves and then deteriorates. We also observe similar trends when only increasing the shared layers or both shared layers and task-specific layers, and when varying network width instead of depth. 

The intriguing observation motivates us to better understand the parameterization effect of multi-task models. Multi-objective optimization theory suggests that enough mode capacity is needed to be able to deal with task conflicts. However, treating a multi-task learning model simply as a multi-objective optimization problem only sees one side of the game. Multi-task learning is a more general problem than multi-objective optimization as it leverages parameter sharing and inductive transfer \citep{baxter2000model} to benefit generalization. Over-parameterization intuitively undermines the benefit of sharing, which may hurt multi-task generalization and eventually backfire. Thus, the parameterization effect for MTL is not singular. Instead, it is a trade-off between efficiency and generalization. 

It is worth pointing out that the relation between efficiency and generalization has been discussed in single-task deep learning models. Recent studies show an intriguing double descent generalization curve \citep{belkin2019reconciling} that subsumes the traditional U-shaped bias-variance trade-off curve, which shows that increasing model capacity beyond the point of interpolation results in improved generalization. For multi-task learning, generalization is also a result of how different tasks are sharing representation on top of individual task-specific parameterizations. Therefore the relation between Pareto efficiency and generalization could be more intricate than single-task cases. 

To summarize our insights, for a multi-task learning model, small models benefit from good multi-task generalization but hurts Pareto efficiency; big models theoretically have better Pareto efficiency but could suffer from loss of generalization. This motivates us to design a treatment towards achieving the best of both worlds. We discuss our proposal in the next section. 





\section{Under-Parameterized Self-Auxiliaries for Multi-Task Learning}
\label{method}

\begin{figure}
    \begin{subfigure}[b]{0.33\textwidth}
        \centering
        \includegraphics[width=\textwidth]{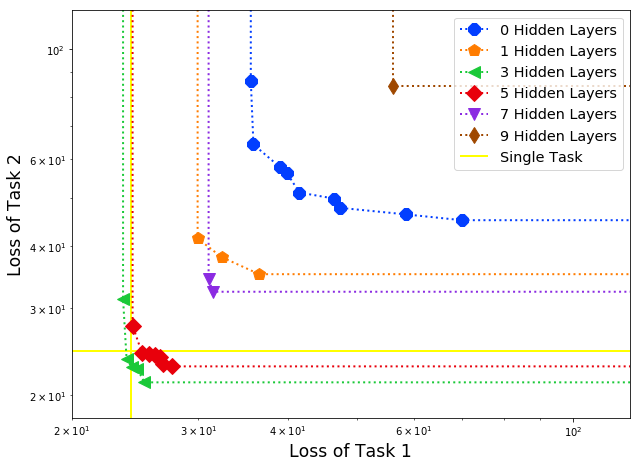}
        \caption{Baseline.}
        \label{fig:4.2a}
    \end{subfigure}
    \centering
    \begin{subfigure}[b]{0.32\textwidth}
        \centering
        \includegraphics[width=\textwidth]{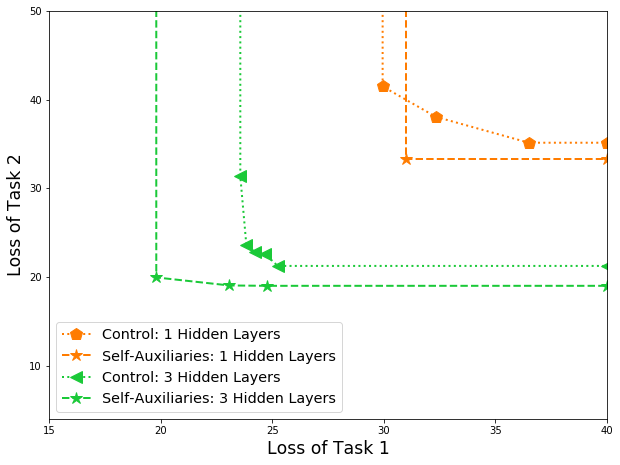}
        \caption{1-3 hidden layers.}
        \label{fig:4.2b}
    \end{subfigure}
    \begin{subfigure}[b]{0.32\textwidth}
        \centering
        \includegraphics[width=\textwidth]{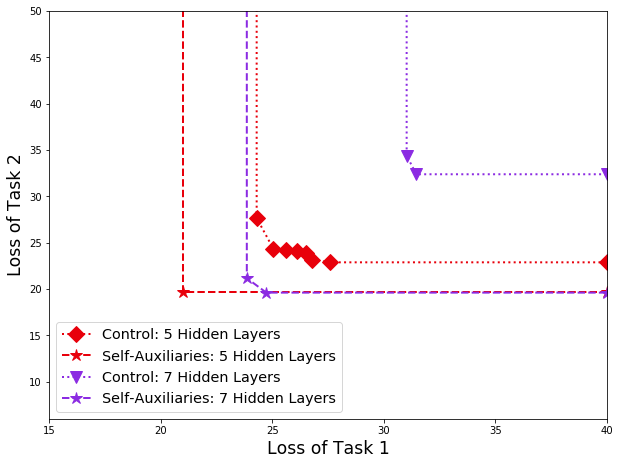}
        \caption{5-7 hidden layers.}
        \label{fig:4.2c}
    \end{subfigure}
   \caption{Pareto frontiers on synthetic data. (a): Baseline Pareto frontiers with increasing model capacity. The best single task performance across all models is also reported. (b)-(c): Comparison of our method with baseline on different model capacities.}
  \label{fig4.2}
\end{figure}

We now introduce our proposed method of \textit{under-parameterized self-auxiliaries} for multi-task deep learning. The most popular architecture for multi-task learning problems consists of a shared representation across all tasks together with separate task-specific functions. The model family $\mathcal{H}_t$ for each task $t$ is constrained to be: 
\begin{equation}
\label{eqn:4.1}
f_t(x;\theta_{sh}, \theta_t) = f_t(h(x;\theta_{sh}); \theta_t), \forall t,
\end{equation}
where $h(\cdot;\cdot): \mathcal{X}\times\Theta \rightarrow \mathcal{R}^M$ represents the shared representation.

Now we construct a self-auxiliary tower for every task with the same task labels and a different parameterization:
\begin{equation}
\label{eqn:4.2}
f_t^{a}(x;\theta_{sh}, \theta_t^{a}) = f_t(h(x;\theta_{sh}); \theta_t^{a}), \forall t, 
\end{equation}

where the superscript stands for auxiliary. Inspired by the insights from Section \ref{understand_pf}, we let the auxiliary towers parameterized by $\theta_t^{a}$ to be small enough. With the under-parameterized self-auxiliaries, the empirical loss is then defined as 
\begin{equation}
\label{eqn:4.3}
\Lhat(\theta)=\sum_{t=1}^T w_t (\Lhat_t(\theta_{sh}, \theta_t) + \gamma\Lhat_t(\theta_{sh}, \theta_t^a))
\end{equation}

\begin{wrapfigure}{r}{0.4\linewidth}
\centering
\includegraphics[width=0.4\textwidth]{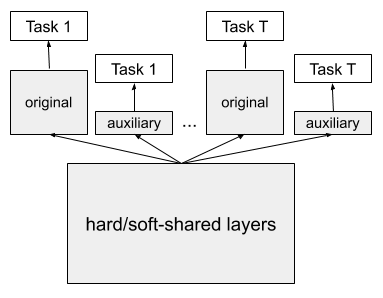}
\caption{An illustration of under-parameterized self-auxiliaries for multi-task learning.}
\label{fig4.1}
\vspace{-5mm}
\end{wrapfigure}

where $w_t$ is the weight for task $t$, $\Lhat_t(\theta_{sh}, \theta_t^a) = \frac{1}{n}\sum_{i=1}^n \L_t(f_t(x_i;\theta_{sh}, \theta_t^a), y^t_i)$ is the loss for task $t$'s self-auxiliary, and $\gamma>0$ controls the weight of the auxiliary loss. At inference time, the self-auxiliaries are discarded and \textit{only} $f_t(x;\theta_{sh}, \theta_t)$ is used as task $t$'s prediction.

Figure \ref{fig4.1} illustrates the proposed method. Self-auxiliaries effectively double the number of tasks. However, because they are under-parameterized, only a small number of additional parameters is introduced. And because self-auxiliaries are only used in training as an additional loss term, they do not incur any extra cost at serving time. 
We experiment with under-parameterized self-auxiliaries on the synthetic example in Section \ref{understand_pf}. As shown in Figure \ref{fig:4.2b} and \ref{fig:4.2c}, under-parameterized self-auxiliaries significantly improve the Pareto frontier on the test dataset under all levels of parameterization. And the improvement is larger with larger models.



The fact that under-parameterized self-auxiliaries improve Pareto efficiency for multi-task learning is not surprising to us. From the insights in Section \ref{understand_pf}, over-parameterized multi-task learning model undermines the benefit of shared representations. By co-training the same tasks with under-parameterized small towers, the shared part of the multi-task model $h(x;\theta_{sh})$ is forced to learn a shared representation that explains both sets of tasks as much as possible. In other words, the self-auxiliaries act as regularization. However, instead of adding explicit constraints on model capacity or prior assumptions on the weight distribution, the self-auxiliaries \textit{implicitly} regularizes the multi-task training dynamics. As a result, sharing happens in bottom levels as much as possible. And the original task towers effectively have more capacity to learn the task specifics and handle conflicts. In this way, even if the original task towers have the exact same parameterization as the case without self-auxiliaries, they are able to generalize better because of improved learning of the shared parameters.  

The architectures for the under-parameterized self-auxiliaries are flexible. The general guideline is that any tower $f_t^{a}(x;\theta_{sh}, \theta_t^{a})$ that is significantly smaller than the original tower $f_t(x;\theta_{sh}, \theta_t)$ would work. In this sense our approach is basically model agnostic as it is general, adaptive and can be applied to any model architecture. For example, one can simply use a single fully-connected layer over the shared representation $h(x;\theta_{sh})$ as the self-auxiliary tower (Figure \ref{fig:4.3a}). If the original model is relatively small or dimension of the shared layer output $M$ is big, we can further reduce the parameterization of self-auxiliaries through average pooling (Figure \ref{fig:4.3b}) of the shared representation. For multi-class classification tasks, the final layer is a softmax layer with the size equal to the number of classes $C$. In this case, a single fully-connected layer as self-auxiliaries introduce $CM$ additional parameters, which could be a considerable amount if both $C$ and $M$ are large as in many multi-task applications. We can instead let the self-auxiliaries be a two-layer tower with a bottleneck layer of size $b \ll M, C$ (Figure \ref{fig:4.3c}), in this case the number of additional parameters will be $\mathcal{O}(\max(C, M))$ instead of $\mathcal{O}(CM)$.

It is worth noting that, obviously, the proposed method of under-parameterized self-auxiliaries for multi-task learning can be combined with almost all existing multi-task learning algorithms including uncertainty weighting \citep{kendall2018multi}, gradient surgery \citep{yu2020gradient} and multi-objective optimization algorithms \citep{sener2018multi, lin2019pareto}.

\begin{figure}
    \centering
    \begin{subfigure}[b]{0.31\textwidth}
        \centering
        \includegraphics[width=\textwidth]{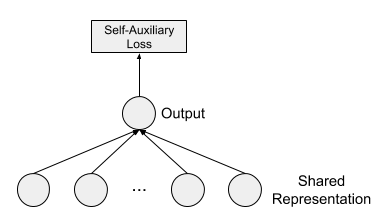}
        \caption{Single fully-connected layer.}
        \label{fig:4.3a}
    \end{subfigure}
    \begin{subfigure}[b]{0.31\textwidth}
        \centering
        \includegraphics[width=\textwidth]{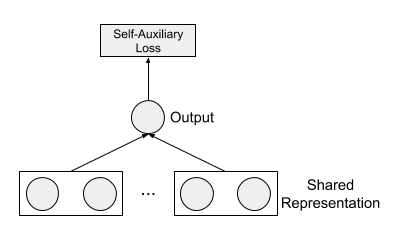}
        \caption{Average pooling.}
        \label{fig:4.3b}
    \end{subfigure}
    \begin{subfigure}[b]{0.31\textwidth}
        \centering
        \includegraphics[width=\textwidth]{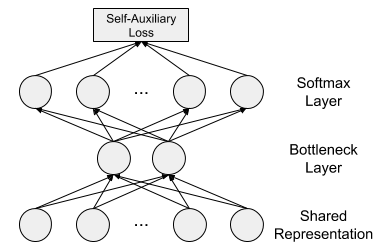}
        \caption{Bottleneck layer.}
        \label{fig:4.3c}
    \end{subfigure}
   \caption{Example architectures for under-parameterized self-auxiliaries. (a): Single fully-connected layer. (b): Single layer with average pooling. (c): Two-layer tower with bottleneck layer.}
  \label{fig4.3}
\end{figure}

\section{Experiments}
\label{experiments}
We evaluate the proposed method of under-parameterized self-auxiliaries on a number of multi-task applications with classification and regression tasks. To demonstrate the effectiveness and robustness of our method, we conduct experiments on benchmark datasets from two different areas, image classification and recommendation systems. We compare our method with state-of-the-art model-agnostic methods for multi-task learning. 

The baselines we compare against are: (1) \textbf{single task baseline(ST)}: learning each task separately; (2) \textbf{linear weighting (MTL)}: linear scalarization method $\Lhat(\theta)=\sum_{t=1}^T w_t\Lhat_t(\theta)$ with the weights varying in the simplex $\{w=(w_1,...w_T)|\sum_{t=1}^T w_t = 1, w_t\geq 0, \forall t\}$; (3) \textbf{uncertainty weighting (Uncertainty)}: learning uncertainty of the tasks which are used as loss weights \citep{kendall2018multi}; (4) \textbf{Multiple-gradient descent algorithm (MGDA-UB)}: using a modified multiple-gradient descent algorithm from multi-objective optimization \citep{sener2018multi}; (5) \textbf{gradient surgery (PCGrad)}: a gradient projection method for mitigating task conflicts \citep{yu2020gradient}. 


\subsection{MultiMNIST and MultiFashionMNIST}
\label{sec:5.1}
We first conduct experiments on multiple image classification tasks. We extend MNIST \citep{lecun1998gradient} and FashionMNIST \citep{xiao2017fashion} to a multi-task setup similar to \cite{sener2018multi} and \cite{lin2019pareto}. For MultiMNIST, two $32\times32$ images are chosen at random from the MNIST dataset. Then one is put at the top-left corner and the other is at bottom-right, overlapping each other with a vertical and horizontal stride as 4 pixels. MultiFashionMNIST is constructed in the same way with the images from the FashionMNIST dataset. For each dataset, the multi-task learning problem is to classify the item on the top-left (task 1) and bottom-right (task 2) for each combined image. 

We adopt the LeNet architecture \citep{lecun1998gradient} to build the multi-task model for both applications, with convolutional layers as shared layers and ReLU layers as task-specific layers. We use the bottleneck architecture described in Figure \ref{fig:4.3a} as the self-auxiliary towers. To demonstrate the effectiveness of under-parameterized self-auxiliaries on different model sizes, we compare our method with baseline methods on three model architectures, with increasing number of shared hidden layers and task-specific hidden layers. Details on the architecture and hyperparameters for the main models and self-auxiliaries can be found in Appendix \ref{sec:mnist}. For every method and every model size, we first perform 1000 runs for hyperparameters tuning, and then do another 1000 runs with the selected hyperparameters to generate the Pareto frontier. We visualize the results on task 1 and task 2 test accuracy in Figure \ref{fig5.1}. Our method achieves similar performance compared with the best baseline method for small models (Figures \ref{fig:5.1a}, \ref{fig:5.1d}), and better than other baselines for medium (Figures \ref{fig:5.1b}, \ref{fig:5.1e}) and large models (Figures \ref{fig:5.1c}, \ref{fig:5.1f}). We also observe that the larger the model, the more improvement our method has over baseline methods. Additional details and results of the experiments can be found in Appendix \ref{sec:mnist}.

\begin{figure}
    \centering
    \begin{subfigure}[b]{0.315\textwidth}
        \centering
        \includegraphics[width=\textwidth]{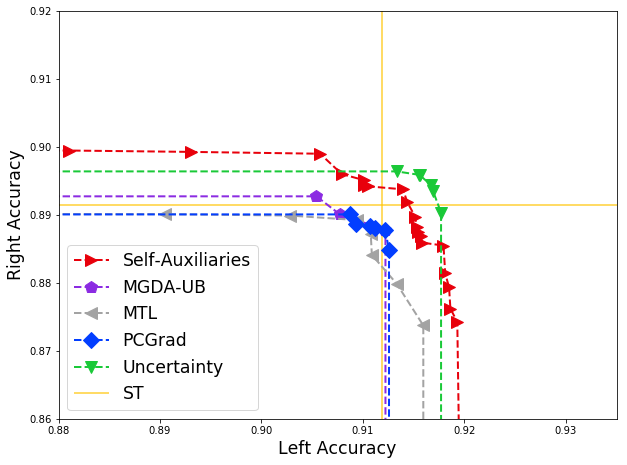}
        \caption{MultiMNIST small model.}
        \label{fig:5.1a}
    \end{subfigure}
    \begin{subfigure}[b]{0.32\textwidth}
        \centering
        \includegraphics[width=\textwidth]{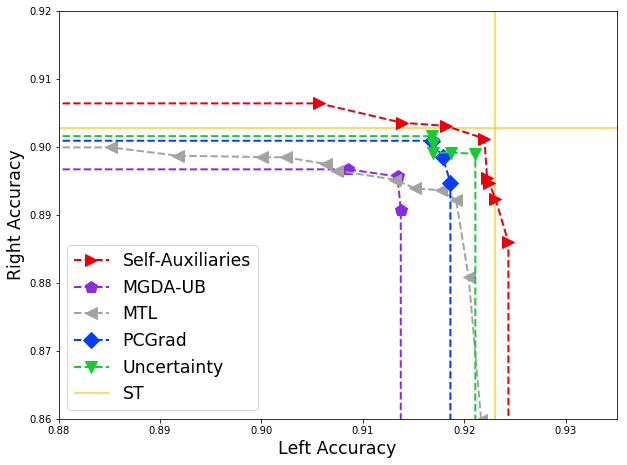}
        \caption{MultiMNIST medium model.}
        \label{fig:5.1b}
    \end{subfigure}
    \begin{subfigure}[b]{0.32\textwidth}
        \centering
        \includegraphics[width=\textwidth]{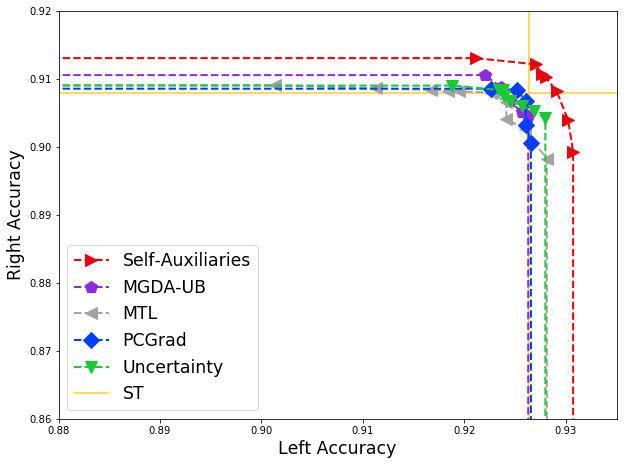}
        \caption{MultiMNIST large model.}
        \label{fig:5.1c}
    \end{subfigure}
    \begin{subfigure}[b]{0.32\textwidth}
        \centering
        \includegraphics[width=\textwidth]{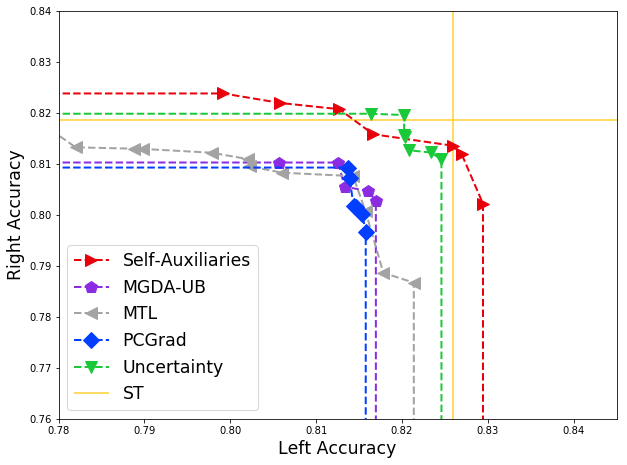}
        \caption{MultiFashion small model.}
        \label{fig:5.1d}
    \end{subfigure}
    \begin{subfigure}[b]{0.32\textwidth}
        \centering
        \includegraphics[width=\textwidth]{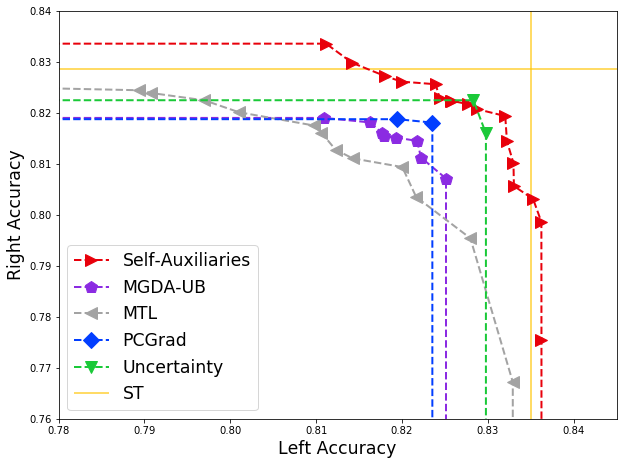}
        \caption{MultiFashion medium model.}
        \label{fig:5.1e}
    \end{subfigure}
    \begin{subfigure}[b]{0.32\textwidth}
        \centering
        \includegraphics[width=\textwidth]{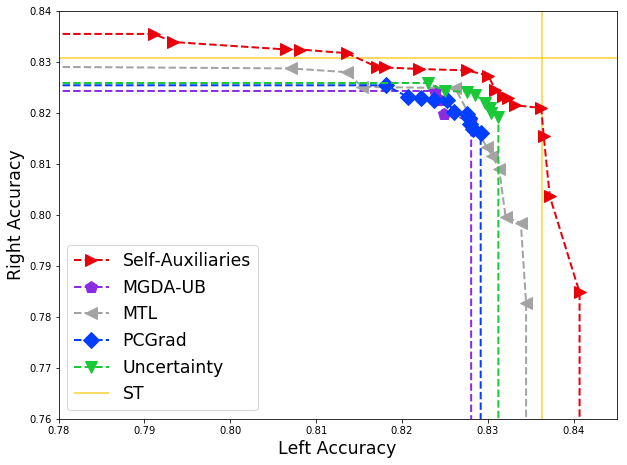}
        \caption{MultiFashion large model.}
        \label{fig:5.1f}
    \end{subfigure}
   \caption{Experiment results on MultiMNIST/MultiFashion datasets with different model capacities.}
  \label{fig5.1}
\end{figure}

\subsection{MovieLens}

We further evaluate our method on another multi-task application of moive recommendatino with the tasks being a mixture of regression and classification tasks. To this end, we use the MovieLens 1M dataset \citep{harper2015movielens}, which records 1 million ratings from 6000 users on 4000 movies. For every user and movie pair, we construct the following two tasks: A binary classification task to predict whether the user watches the movie (task 1); and a regression task to predict the user's rating (1-5) on the movie as a float value. The design of the tasks as well as the model architecture is similar to what is described in a real-world large-scale recommendation system \citep{covington2016deep}. 

Specifically, we adopt a shared-bottom model architecture with shared layers and task-specific layers of ReLU activation \citep{caruana1997multitask}. To further understand how different parameterizations of self-auxiliaries affect model performance, we also experiment with average pooling (Figure \ref{fig:4.3b}) with different pool sizes on the last shared hidden layer as the input for the self-auxiliary towers. The performance of the tasks is measured in error rate for watch prediction and mean squared error (MSE) for rating prediction on the test dataset. Details on data processing and model architecture can be found in Appendix \ref{sec:movielens}.

For each baseline method and our method, we perform 1000 runs to search for the best hyper-parameters in terms of learning rate and model architecture. Then we run each method with its best hyper-parameter setup for another 1000 runs to generate final results. Figure \ref{fig:5.2a} shows the Pareto frontier for the two tasks, indicating that our method with average pooing significantly improves the performance on both tasks. To understand the effectiveness of average pooling, Figure \ref{fig:5.2b} shows the performance of our methods with different input dimensions for the self-auxiliary tower after average pooling. We can see that by reducing the parameterization of self-auxiliary towers, we can further improve the performance of our method.
\begin{figure}
    \centering
    \begin{subfigure}[b]{0.34\textwidth}
        \centering
        \includegraphics[width=\textwidth]{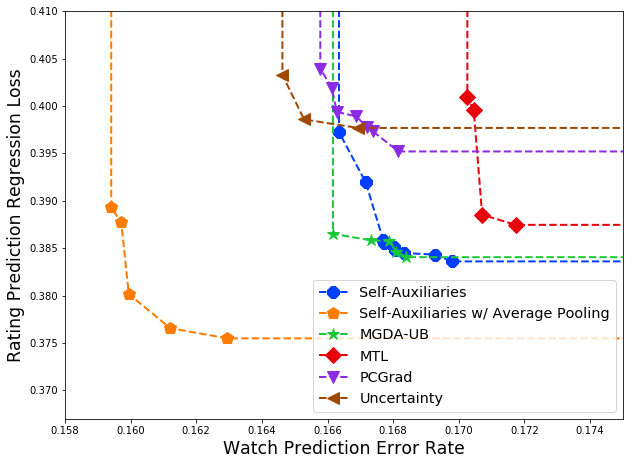}
        \caption{Self-auxiliaries vs. baselines.}
        \label{fig:5.2a}
    \end{subfigure}
    \hspace{1.5cm}
    \begin{subfigure}[b]{0.34\textwidth}
        \centering
        \includegraphics[width=\textwidth]{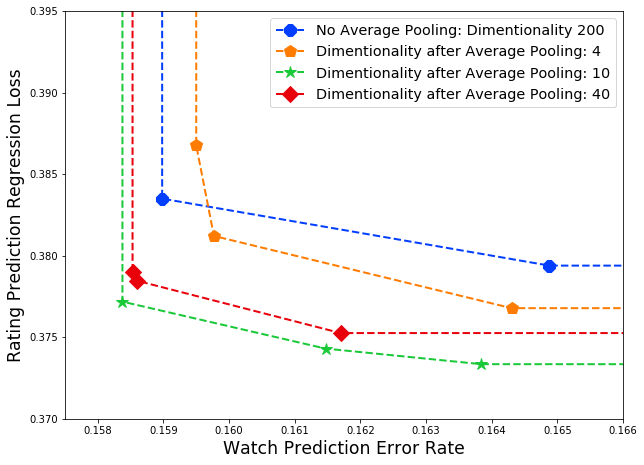}
        \caption{Effect of average pooling.}
        \label{fig:5.2b}
    \end{subfigure}
   \caption{Experiment results on MovieLens dataset.}
  \label{fig5.2}
\end{figure}

\subsection{Discussion}
We demonstrate the effectiveness of under-parameterized self-auxiliaries in three multi-task datasets covering different applications. Our proposed method is able to significantly improve Pareto efficiency in real-world multi-task problems, with a negligible increase in the number of parameters introduced at training time and no additional cost at inference time. It works well on different model architectures and different types of tasks. 

In addition, we observe larger benefit of our method with larger model architectures. This confirms our insights from Section \ref{method} that under-parameterized self-auxiliaries help achieve a better balance between Pareto efficiency and multi-task generalization through implicit regularization. We also observe that the performance of self-auxiliaries can be improved through further reducing the parameterization of the self-auxiliaries. Techniques such as average pooling and adding bottleneck layers can be treated as hyperparameters in practice to optimize the performance.

\section{Conclusion}
In this paper, we showed that, unlike single-task learning, there is a trade-off between efficiency and generalization introduced by parameterization in multi-task models. We proposed a method of under-parameterized self-auxiliaries for multi-task learning. By introducing auxiliary losses from adding small towers on the same tasks, the model is able to achieve better Pareto efficiency from balancing multi-objective optimization and multi-task learning. Experimental results on synthetic and real datasets demonstrated the effectiveness of under-parameterized self-auxiliaries in a number of multi-task applications.

\section*{Broader Impact}
Multi-task learning provides a cost-effective way to learn multiple targets simultaneously and has wide applications in real-world problems. Our proposed method of under-parameterized self-auxiliaries is a general methodology for improving multi-task learning. It helps save cost and resources such as machine learning infrastructures.

Its ability to achieve a better trade-off between efficiency and generalization for multi-task learning could help many real-world applications where improvement of multiple objectives is desired with limited budget. Examples include detecting different types of scams on social media, or providing satisfying recommendation results to different user groups. 

Multi-task learning, or all deep learning models in general, have the risk of producing biased predictions reflective of the bias in the training data. This can happen if the model architecture or the objectives are not formulated properly to address the potential concerns. Our work is no exception here. Self-auxiliaries provide a task-agnostic way to improve Pareto efficiency for multi-task learning models. If for example the tasks are formulated without considering potential ethical consequences, then simply adding self-auxiliaries to the model would not help reduce the bias. However in other cases where fairness objectives are added or accounted for in the model architecture, our method helps achieve the fairness goal by improving the Pareto efficiency of the corresponding multi-task models.


\bibliography{references}


\begin{thebibliography}{00}


\ifx \showCODEN    \undefined \def \showCODEN     #1{\unskip}     \fi
\ifx \showDOI      \undefined \def \showDOI       #1{{\tt DOI:}\penalty0{#1}\ }
  \fi
\ifx \showISBNx    \undefined \def \showISBNx     #1{\unskip}     \fi
\ifx \showISBNxiii \undefined \def \showISBNxiii  #1{\unskip}     \fi
\ifx \showISSN     \undefined \def \showISSN      #1{\unskip}     \fi
\ifx \showLCCN     \undefined \def \showLCCN      #1{\unskip}     \fi
\ifx \shownote     \undefined \def \shownote      #1{#1}          \fi
\ifx \showarticletitle \undefined \def \showarticletitle #1{#1}   \fi
\ifx \showURL      \undefined \def \showURL       #1{#1}          \fi
\providecommand\bibfield[2]{#2}
\providecommand\bibinfo[2]{#2}
\providecommand\natexlab[1]{#1}
\providecommand\showeprint[2][]{arXiv:#2}

\bibitem[\protect\citeauthoryear{Anil, Pereyra, Passos, Ormandi, Dahl, and
  Hinton}{Anil et~al\mbox{.}}{2018}]%
        {anil2018large}
\bibfield{author}{\bibinfo{person}{Rohan Anil}, \bibinfo{person}{Gabriel
  Pereyra}, \bibinfo{person}{Alexandre Passos}, \bibinfo{person}{Robert
  Ormandi}, \bibinfo{person}{George~E Dahl}, {and} \bibinfo{person}{Geoffrey~E
  Hinton}.} \bibinfo{year}{2018}\natexlab{}.
\newblock \showarticletitle{Large scale distributed neural network training
  through online distillation}.
\newblock \bibinfo{journal}{{\em arXiv preprint arXiv:1804.03235\/}}
  (\bibinfo{year}{2018}).
\newblock


\bibitem[\protect\citeauthoryear{Arik, Chrzanowski, Coates, Diamos, Gibiansky,
  Kang, Li, Miller, Ng, Raiman, et~al\mbox{.}}{Arik et~al\mbox{.}}{2017}]%
        {arik2017deep}
\bibfield{author}{\bibinfo{person}{Sercan~{\"O} Arik}, \bibinfo{person}{Mike
  Chrzanowski}, \bibinfo{person}{Adam Coates}, \bibinfo{person}{Gregory
  Diamos}, \bibinfo{person}{Andrew Gibiansky}, \bibinfo{person}{Yongguo Kang},
  \bibinfo{person}{Xian Li}, \bibinfo{person}{John Miller},
  \bibinfo{person}{Andrew Ng}, \bibinfo{person}{Jonathan Raiman}, {and}
  \bibinfo{person}{others}.} \bibinfo{year}{2017}\natexlab{}.
\newblock \showarticletitle{Deep voice: Real-time neural text-to-speech}. In
  \bibinfo{booktitle}{{\em Proceedings of the 34th International Conference on
  Machine Learning-Volume 70}}. JMLR. org, \bibinfo{pages}{195--204}.
\newblock


\bibitem[\protect\citeauthoryear{Athan and Papalambros}{Athan and
  Papalambros}{1996}]%
        {athan1996note}
\bibfield{author}{\bibinfo{person}{Timothy~Ward Athan} {and}
  \bibinfo{person}{Panos~Y Papalambros}.} \bibinfo{year}{1996}\natexlab{}.
\newblock \showarticletitle{A note on weighted criteria methods for compromise
  solutions in multi-objective optimization}.
\newblock \bibinfo{journal}{{\em Engineering optimization\/}}
  \bibinfo{volume}{27}, \bibinfo{number}{2} (\bibinfo{year}{1996}),
  \bibinfo{pages}{155--176}.
\newblock


\bibitem[\protect\citeauthoryear{Bansal, Belanger, and McCallum}{Bansal
  et~al\mbox{.}}{2016}]%
        {bansal2016ask}
\bibfield{author}{\bibinfo{person}{Trapit Bansal}, \bibinfo{person}{David
  Belanger}, {and} \bibinfo{person}{Andrew McCallum}.}
  \bibinfo{year}{2016}\natexlab{}.
\newblock \showarticletitle{Ask the gru: Multi-task learning for deep text
  recommendations}. In \bibinfo{booktitle}{{\em Proceedings of the 10th ACM
  Conference on Recommender Systems}}. \bibinfo{pages}{107--114}.
\newblock


\bibitem[\protect\citeauthoryear{Baxter}{Baxter}{2000}]%
        {baxter2000model}
\bibfield{author}{\bibinfo{person}{Jonathan Baxter}.}
  \bibinfo{year}{2000}\natexlab{}.
\newblock \showarticletitle{A model of inductive bias learning}.
\newblock \bibinfo{journal}{{\em Journal of artificial intelligence
  research\/}}  \bibinfo{volume}{12} (\bibinfo{year}{2000}),
  \bibinfo{pages}{149--198}.
\newblock


\bibitem[\protect\citeauthoryear{Belkin, Hsu, Ma, and Mandal}{Belkin
  et~al\mbox{.}}{2019}]%
        {belkin2019reconciling}
\bibfield{author}{\bibinfo{person}{Mikhail Belkin}, \bibinfo{person}{Daniel
  Hsu}, \bibinfo{person}{Siyuan Ma}, {and} \bibinfo{person}{Soumik Mandal}.}
  \bibinfo{year}{2019}\natexlab{}.
\newblock \showarticletitle{Reconciling modern machine-learning practice and
  the classical bias--variance trade-off}.
\newblock \bibinfo{journal}{{\em Proceedings of the National Academy of
  Sciences\/}} \bibinfo{volume}{116}, \bibinfo{number}{32}
  (\bibinfo{year}{2019}), \bibinfo{pages}{15849--15854}.
\newblock


\bibitem[\protect\citeauthoryear{Caruana}{Caruana}{1997}]%
        {caruana1997multitask}
\bibfield{author}{\bibinfo{person}{Rich Caruana}.}
  \bibinfo{year}{1997}\natexlab{}.
\newblock \showarticletitle{Multitask learning}.
\newblock \bibinfo{journal}{{\em Machine learning\/}} \bibinfo{volume}{28},
  \bibinfo{number}{1} (\bibinfo{year}{1997}), \bibinfo{pages}{41--75}.
\newblock


\bibitem[\protect\citeauthoryear{Caruana and De~Sa}{Caruana and De~Sa}{1997}]%
        {caruana1997promoting}
\bibfield{author}{\bibinfo{person}{Rich Caruana} {and}
  \bibinfo{person}{Virginia~R De~Sa}.} \bibinfo{year}{1997}\natexlab{}.
\newblock \showarticletitle{Promoting poor features to supervisors: Some inputs
  work better as outputs}. In \bibinfo{booktitle}{{\em Advances in Neural
  Information Processing Systems}}. \bibinfo{pages}{389--395}.
\newblock


\bibitem[\protect\citeauthoryear{Chen, Badrinarayanan, Lee, and
  Rabinovich}{Chen et~al\mbox{.}}{2017}]%
        {chen2017gradnorm}
\bibfield{author}{\bibinfo{person}{Zhao Chen}, \bibinfo{person}{Vijay
  Badrinarayanan}, \bibinfo{person}{Chen-Yu Lee}, {and} \bibinfo{person}{Andrew
  Rabinovich}.} \bibinfo{year}{2017}\natexlab{}.
\newblock \showarticletitle{Gradnorm: Gradient normalization for adaptive loss
  balancing in deep multitask networks}.
\newblock \bibinfo{journal}{{\em arXiv preprint arXiv:1711.02257\/}}
  (\bibinfo{year}{2017}).
\newblock


\bibitem[\protect\citeauthoryear{Collobert and Weston}{Collobert and
  Weston}{2008}]%
        {collobert2008unified}
\bibfield{author}{\bibinfo{person}{Ronan Collobert} {and}
  \bibinfo{person}{Jason Weston}.} \bibinfo{year}{2008}\natexlab{}.
\newblock \showarticletitle{A unified architecture for natural language
  processing: Deep neural networks with multitask learning}. In
  \bibinfo{booktitle}{{\em Proceedings of the 25th international conference on
  Machine learning}}. \bibinfo{pages}{160--167}.
\newblock


\bibitem[\protect\citeauthoryear{Conitzer}{Conitzer}{2009}]%
        {conitzer2009eliciting}
\bibfield{author}{\bibinfo{person}{Vincent Conitzer}.}
  \bibinfo{year}{2009}\natexlab{}.
\newblock \showarticletitle{Eliciting single-peaked preferences using
  comparison queries}.
\newblock \bibinfo{journal}{{\em Journal of Artificial Intelligence
  Research\/}}  \bibinfo{volume}{35} (\bibinfo{year}{2009}),
  \bibinfo{pages}{161--191}.
\newblock


\bibitem[\protect\citeauthoryear{Covington, Adams, and Sargin}{Covington
  et~al\mbox{.}}{2016}]%
        {covington2016deep}
\bibfield{author}{\bibinfo{person}{Paul Covington}, \bibinfo{person}{Jay
  Adams}, {and} \bibinfo{person}{Emre Sargin}.}
  \bibinfo{year}{2016}\natexlab{}.
\newblock \showarticletitle{Deep neural networks for youtube recommendations}.
  In \bibinfo{booktitle}{{\em Proceedings of the 10th ACM conference on
  recommender systems}}. \bibinfo{pages}{191--198}.
\newblock


\bibitem[\protect\citeauthoryear{D{\'e}sid{\'e}ri}{D{\'e}sid{\'e}ri}{2012}]%
        {desideri2012multiple}
\bibfield{author}{\bibinfo{person}{Jean-Antoine D{\'e}sid{\'e}ri}.}
  \bibinfo{year}{2012}\natexlab{}.
\newblock \showarticletitle{Multiple-gradient descent algorithm (MGDA) for
  multiobjective optimization}.
\newblock \bibinfo{journal}{{\em Comptes Rendus Mathematique\/}}
  \bibinfo{volume}{350}, \bibinfo{number}{5-6} (\bibinfo{year}{2012}),
  \bibinfo{pages}{313--318}.
\newblock


\bibitem[\protect\citeauthoryear{Dosovitskiy and Djolonga}{Dosovitskiy and
  Djolonga}{2020}]%
        {Dosovitskiy2020You}
\bibfield{author}{\bibinfo{person}{Alexey Dosovitskiy} {and}
  \bibinfo{person}{Josip Djolonga}.} \bibinfo{year}{2020}\natexlab{}.
\newblock \showarticletitle{You Only Train Once: Loss-Conditional Training of
  Deep Networks}. In \bibinfo{booktitle}{{\em International Conference on
  Learning Representations}}.
\newblock
\showURL{%
\url{https://openreview.net/forum?id=HyxY6JHKwr}}


\bibitem[\protect\citeauthoryear{Finn, Abbeel, and Levine}{Finn
  et~al\mbox{.}}{2017}]%
        {finn2017model}
\bibfield{author}{\bibinfo{person}{Chelsea Finn}, \bibinfo{person}{Pieter
  Abbeel}, {and} \bibinfo{person}{Sergey Levine}.}
  \bibinfo{year}{2017}\natexlab{}.
\newblock \showarticletitle{Model-agnostic meta-learning for fast adaptation of
  deep networks}. In \bibinfo{booktitle}{{\em Proceedings of the 34th
  International Conference on Machine Learning-Volume 70}}. JMLR. org,
  \bibinfo{pages}{1126--1135}.
\newblock


\bibitem[\protect\citeauthoryear{Ganin and Lempitsky}{Ganin and
  Lempitsky}{2014}]%
        {ganin2014unsupervised}
\bibfield{author}{\bibinfo{person}{Yaroslav Ganin} {and}
  \bibinfo{person}{Victor Lempitsky}.} \bibinfo{year}{2014}\natexlab{}.
\newblock \showarticletitle{Unsupervised domain adaptation by backpropagation}.
\newblock \bibinfo{journal}{{\em arXiv preprint arXiv:1409.7495\/}}
  (\bibinfo{year}{2014}).
\newblock


\bibitem[\protect\citeauthoryear{Ghane-Kanafi and Khorram}{Ghane-Kanafi and
  Khorram}{2015}]%
        {ghane2015new}
\bibfield{author}{\bibinfo{person}{A Ghane-Kanafi} {and} \bibinfo{person}{E
  Khorram}.} \bibinfo{year}{2015}\natexlab{}.
\newblock \showarticletitle{A new scalarization method for finding the
  efficient frontier in non-convex multi-objective problems}.
\newblock \bibinfo{journal}{{\em Applied Mathematical Modelling\/}}
  \bibinfo{volume}{39}, \bibinfo{number}{23-24} (\bibinfo{year}{2015}),
  \bibinfo{pages}{7483--7498}.
\newblock


\bibitem[\protect\citeauthoryear{Girshick}{Girshick}{2015}]%
        {girshick2015fast}
\bibfield{author}{\bibinfo{person}{Ross Girshick}.}
  \bibinfo{year}{2015}\natexlab{}.
\newblock \showarticletitle{Fast r-cnn}. In \bibinfo{booktitle}{{\em
  Proceedings of the IEEE international conference on computer vision}}.
  \bibinfo{pages}{1440--1448}.
\newblock


\bibitem[\protect\citeauthoryear{Harper and Konstan}{Harper and
  Konstan}{2015}]%
        {harper2015movielens}
\bibfield{author}{\bibinfo{person}{F~Maxwell Harper} {and}
  \bibinfo{person}{Joseph~A Konstan}.} \bibinfo{year}{2015}\natexlab{}.
\newblock \showarticletitle{The movielens datasets: History and context}.
\newblock \bibinfo{journal}{{\em Acm transactions on interactive intelligent
  systems (tiis)\/}} \bibinfo{volume}{5}, \bibinfo{number}{4}
  (\bibinfo{year}{2015}), \bibinfo{pages}{1--19}.
\newblock


\bibitem[\protect\citeauthoryear{Hashimoto, Xiong, Tsuruoka, and
  Socher}{Hashimoto et~al\mbox{.}}{2016}]%
        {hashimoto2016joint}
\bibfield{author}{\bibinfo{person}{Kazuma Hashimoto}, \bibinfo{person}{Caiming
  Xiong}, \bibinfo{person}{Yoshimasa Tsuruoka}, {and} \bibinfo{person}{Richard
  Socher}.} \bibinfo{year}{2016}\natexlab{}.
\newblock \showarticletitle{A joint many-task model: Growing a neural network
  for multiple nlp tasks}.
\newblock \bibinfo{journal}{{\em arXiv preprint arXiv:1611.01587\/}}
  (\bibinfo{year}{2016}).
\newblock


\bibitem[\protect\citeauthoryear{Hinton, Vinyals, and Dean}{Hinton
  et~al\mbox{.}}{2015}]%
        {hinton2015distilling}
\bibfield{author}{\bibinfo{person}{Geoffrey Hinton}, \bibinfo{person}{Oriol
  Vinyals}, {and} \bibinfo{person}{Jeff Dean}.}
  \bibinfo{year}{2015}\natexlab{}.
\newblock \showarticletitle{Distilling the knowledge in a neural network}.
\newblock \bibinfo{journal}{{\em arXiv preprint arXiv:1503.02531\/}}
  (\bibinfo{year}{2015}).
\newblock


\bibitem[\protect\citeauthoryear{Jones, Tamiz, et~al\mbox{.}}{Jones
  et~al\mbox{.}}{2010}]%
        {jones2010practical}
\bibfield{author}{\bibinfo{person}{Dylan Jones}, \bibinfo{person}{Mehrdad
  Tamiz}, {and} \bibinfo{person}{others}.} \bibinfo{year}{2010}\natexlab{}.
\newblock \bibinfo{booktitle}{{\em Practical goal programming}}.
  Vol.~\bibinfo{volume}{141}.
\newblock \bibinfo{publisher}{Springer}.
\newblock


\bibitem[\protect\citeauthoryear{Kendall, Gal, and Cipolla}{Kendall
  et~al\mbox{.}}{2018}]%
        {kendall2018multi}
\bibfield{author}{\bibinfo{person}{Alex Kendall}, \bibinfo{person}{Yarin Gal},
  {and} \bibinfo{person}{Roberto Cipolla}.} \bibinfo{year}{2018}\natexlab{}.
\newblock \showarticletitle{Multi-task learning using uncertainty to weigh
  losses for scene geometry and semantics}. In \bibinfo{booktitle}{{\em
  Proceedings of the IEEE conference on computer vision and pattern
  recognition}}. \bibinfo{pages}{7482--7491}.
\newblock


\bibitem[\protect\citeauthoryear{Kochenderfer and Wheeler}{Kochenderfer and
  Wheeler}{2019}]%
        {kochenderfer2019algorithms}
\bibfield{author}{\bibinfo{person}{Mykel~J Kochenderfer} {and}
  \bibinfo{person}{Tim~A Wheeler}.} \bibinfo{year}{2019}\natexlab{}.
\newblock \bibinfo{booktitle}{{\em Algorithms for optimization}}.
\newblock \bibinfo{publisher}{Mit Press}.
\newblock


\bibitem[\protect\citeauthoryear{LeCun, Bottou, Bengio, and Haffner}{LeCun
  et~al\mbox{.}}{1998}]%
        {lecun1998gradient}
\bibfield{author}{\bibinfo{person}{Yann LeCun}, \bibinfo{person}{L{\'e}on
  Bottou}, \bibinfo{person}{Yoshua Bengio}, {and} \bibinfo{person}{Patrick
  Haffner}.} \bibinfo{year}{1998}\natexlab{}.
\newblock \showarticletitle{Gradient-based learning applied to document
  recognition}.
\newblock \bibinfo{journal}{{\it Proc. IEEE}} \bibinfo{volume}{86},
  \bibinfo{number}{11} (\bibinfo{year}{1998}), \bibinfo{pages}{2278--2324}.
\newblock


\bibitem[\protect\citeauthoryear{Lin, Zhen, Li, Zhang, and Kwong}{Lin
  et~al\mbox{.}}{2019}]%
        {lin2019pareto}
\bibfield{author}{\bibinfo{person}{Xi Lin}, \bibinfo{person}{Hui-Ling Zhen},
  \bibinfo{person}{Zhenhua Li}, \bibinfo{person}{Qing-Fu Zhang}, {and}
  \bibinfo{person}{Sam Kwong}.} \bibinfo{year}{2019}\natexlab{}.
\newblock \showarticletitle{Pareto Multi-Task Learning}. In
  \bibinfo{booktitle}{{\em Advances in Neural Information Processing Systems}}.
  \bibinfo{pages}{12037--12047}.
\newblock


\bibitem[\protect\citeauthoryear{Liu, Gao, He, Deng, Duh, and Wang}{Liu
  et~al\mbox{.}}{2015}]%
        {liu2015representation}
\bibfield{author}{\bibinfo{person}{Xiaodong Liu}, \bibinfo{person}{Jianfeng
  Gao}, \bibinfo{person}{Xiaodong He}, \bibinfo{person}{Li Deng},
  \bibinfo{person}{Kevin Duh}, {and} \bibinfo{person}{Ye-Yi Wang}.}
  \bibinfo{year}{2015}\natexlab{}.
\newblock \showarticletitle{Representation learning using multi-task deep
  neural networks for semantic classification and information retrieval}.
\newblock  (\bibinfo{year}{2015}).
\newblock


\bibitem[\protect\citeauthoryear{Liu, He, Chen, and Gao}{Liu
  et~al\mbox{.}}{2019}]%
        {liu2019improving}
\bibfield{author}{\bibinfo{person}{Xiaodong Liu}, \bibinfo{person}{Pengcheng
  He}, \bibinfo{person}{Weizhu Chen}, {and} \bibinfo{person}{Jianfeng Gao}.}
  \bibinfo{year}{2019}\natexlab{}.
\newblock \showarticletitle{Improving multi-task deep neural networks via
  knowledge distillation for natural language understanding}.
\newblock \bibinfo{journal}{{\em arXiv preprint arXiv:1904.09482\/}}
  (\bibinfo{year}{2019}).
\newblock


\bibitem[\protect\citeauthoryear{Lu, Kumar, Zhai, Cheng, Javidi, and Feris}{Lu
  et~al\mbox{.}}{2017}]%
        {lu2017fully}
\bibfield{author}{\bibinfo{person}{Yongxi Lu}, \bibinfo{person}{Abhishek
  Kumar}, \bibinfo{person}{Shuangfei Zhai}, \bibinfo{person}{Yu Cheng},
  \bibinfo{person}{Tara Javidi}, {and} \bibinfo{person}{Rogerio Feris}.}
  \bibinfo{year}{2017}\natexlab{}.
\newblock \showarticletitle{Fully-adaptive feature sharing in multi-task
  networks with applications in person attribute classification}. In
  \bibinfo{booktitle}{{\em Proceedings of the IEEE Conference on Computer
  Vision and Pattern Recognition}}. \bibinfo{pages}{5334--5343}.
\newblock


\bibitem[\protect\citeauthoryear{Ma, Zhao, Chen, Li, Hong, and Chi}{Ma
  et~al\mbox{.}}{2019}]%
        {ma2019snr}
\bibfield{author}{\bibinfo{person}{Jiaqi Ma}, \bibinfo{person}{Zhe Zhao},
  \bibinfo{person}{Jilin Chen}, \bibinfo{person}{Ang Li},
  \bibinfo{person}{Lichan Hong}, {and} \bibinfo{person}{Ed~H Chi}.}
  \bibinfo{year}{2019}\natexlab{}.
\newblock \showarticletitle{SNR: Sub-Network Routing for Flexible Parameter
  Sharing in Multi-task Learning}. In \bibinfo{booktitle}{{\em Proceedings of
  the AAAI Conference on Artificial Intelligence}}, Vol.~\bibinfo{volume}{33}.
  \bibinfo{pages}{216--223}.
\newblock


\bibitem[\protect\citeauthoryear{Ma, Zhao, Yi, Chen, Hong, and Chi}{Ma
  et~al\mbox{.}}{2018}]%
        {ma2018modeling}
\bibfield{author}{\bibinfo{person}{Jiaqi Ma}, \bibinfo{person}{Zhe Zhao},
  \bibinfo{person}{Xinyang Yi}, \bibinfo{person}{Jilin Chen},
  \bibinfo{person}{Lichan Hong}, {and} \bibinfo{person}{Ed~H Chi}.}
  \bibinfo{year}{2018}\natexlab{}.
\newblock \showarticletitle{Modeling task relationships in multi-task learning
  with multi-gate mixture-of-experts}. In \bibinfo{booktitle}{{\em Proceedings
  of the 24th ACM SIGKDD International Conference on Knowledge Discovery \&
  Data Mining}}. \bibinfo{pages}{1930--1939}.
\newblock


\bibitem[\protect\citeauthoryear{Meyerson and Miikkulainen}{Meyerson and
  Miikkulainen}{2018}]%
        {meyerson2018pseudo}
\bibfield{author}{\bibinfo{person}{Elliot Meyerson} {and}
  \bibinfo{person}{Risto Miikkulainen}.} \bibinfo{year}{2018}\natexlab{}.
\newblock \showarticletitle{Pseudo-task Augmentation: From Deep Multitask
  Learning to Intratask Sharing---and Back}.
\newblock \bibinfo{journal}{{\em arXiv preprint arXiv:1803.04062\/}}
  (\bibinfo{year}{2018}).
\newblock


\bibitem[\protect\citeauthoryear{Miettinen}{Miettinen}{2012}]%
        {miettinen2012nonlinear}
\bibfield{author}{\bibinfo{person}{Kaisa Miettinen}.}
  \bibinfo{year}{2012}\natexlab{}.
\newblock \bibinfo{booktitle}{{\em Nonlinear multiobjective optimization}}.
  Vol.~\bibinfo{volume}{12}.
\newblock \bibinfo{publisher}{Springer Science \& Business Media}.
\newblock


\bibitem[\protect\citeauthoryear{Misra, Shrivastava, Gupta, and Hebert}{Misra
  et~al\mbox{.}}{2016}]%
        {misra2016cross}
\bibfield{author}{\bibinfo{person}{Ishan Misra}, \bibinfo{person}{Abhinav
  Shrivastava}, \bibinfo{person}{Abhinav Gupta}, {and} \bibinfo{person}{Martial
  Hebert}.} \bibinfo{year}{2016}\natexlab{}.
\newblock \showarticletitle{Cross-stitch networks for multi-task learning}. In
  \bibinfo{booktitle}{{\em Proceedings of the IEEE Conference on Computer
  Vision and Pattern Recognition}}. \bibinfo{pages}{3994--4003}.
\newblock


\bibitem[\protect\citeauthoryear{Pardalos, {\v{Z}}ilinskas, and
  {\v{Z}}ilinskas}{Pardalos et~al\mbox{.}}{2017}]%
        {pardalos2017non}
\bibfield{author}{\bibinfo{person}{Panos~M Pardalos}, \bibinfo{person}{Antanas
  {\v{Z}}ilinskas}, {and} \bibinfo{person}{Julius {\v{Z}}ilinskas}.}
  \bibinfo{year}{2017}\natexlab{}.
\newblock \bibinfo{booktitle}{{\em Non-convex multi-objective optimization}}.
\newblock \bibinfo{publisher}{Springer}.
\newblock


\bibitem[\protect\citeauthoryear{Rei}{Rei}{2017}]%
        {rei2017semi}
\bibfield{author}{\bibinfo{person}{Marek Rei}.}
  \bibinfo{year}{2017}\natexlab{}.
\newblock \showarticletitle{Semi-supervised multitask learning for sequence
  labeling}.
\newblock \bibinfo{journal}{{\em arXiv preprint arXiv:1704.07156\/}}
  (\bibinfo{year}{2017}).
\newblock


\bibitem[\protect\citeauthoryear{Ren, He, Girshick, and Sun}{Ren
  et~al\mbox{.}}{2015}]%
        {ren2015faster}
\bibfield{author}{\bibinfo{person}{Shaoqing Ren}, \bibinfo{person}{Kaiming He},
  \bibinfo{person}{Ross Girshick}, {and} \bibinfo{person}{Jian Sun}.}
  \bibinfo{year}{2015}\natexlab{}.
\newblock \showarticletitle{Faster r-cnn: Towards real-time object detection
  with region proposal networks}. In \bibinfo{booktitle}{{\em Advances in
  neural information processing systems}}. \bibinfo{pages}{91--99}.
\newblock


\bibitem[\protect\citeauthoryear{Ruder}{Ruder}{2017}]%
        {ruder2017overview}
\bibfield{author}{\bibinfo{person}{Sebastian Ruder}.}
  \bibinfo{year}{2017}\natexlab{}.
\newblock \showarticletitle{An overview of multi-task learning in deep neural
  networks}.
\newblock \bibinfo{journal}{{\em arXiv preprint arXiv:1706.05098\/}}
  (\bibinfo{year}{2017}).
\newblock


\bibitem[\protect\citeauthoryear{Sawaragi, NAKAYAMA, and TANINO}{Sawaragi
  et~al\mbox{.}}{1985}]%
        {sawaragi1985theory}
\bibfield{author}{\bibinfo{person}{Yoshikazu Sawaragi},
  \bibinfo{person}{HIROTAKA NAKAYAMA}, {and} \bibinfo{person}{TETSUZO TANINO}.}
  \bibinfo{year}{1985}\natexlab{}.
\newblock \bibinfo{booktitle}{{\em Theory of multiobjective optimization}}.
\newblock \bibinfo{publisher}{Elsevier}.
\newblock


\bibitem[\protect\citeauthoryear{Schaffer}{Schaffer}{1985}]%
        {schaffer1985multiple}
\bibfield{author}{\bibinfo{person}{J~David Schaffer}.}
  \bibinfo{year}{1985}\natexlab{}.
\newblock \showarticletitle{Multiple objective optimization with vector
  evaluated genetic algorithms}. In \bibinfo{booktitle}{{\em Proceedings of the
  first international conference on genetic algorithms and their applications,
  1985}}. Lawrence Erlbaum Associates. Inc., Publishers.
\newblock


\bibitem[\protect\citeauthoryear{Sener and Koltun}{Sener and Koltun}{2018}]%
        {sener2018multi}
\bibfield{author}{\bibinfo{person}{Ozan Sener} {and} \bibinfo{person}{Vladlen
  Koltun}.} \bibinfo{year}{2018}\natexlab{}.
\newblock \showarticletitle{Multi-task learning as multi-objective
  optimization}. In \bibinfo{booktitle}{{\em Advances in Neural Information
  Processing Systems}}. \bibinfo{pages}{527--538}.
\newblock


\bibitem[\protect\citeauthoryear{Standley, Zamir, Chen, Guibas, Malik, and
  Savarese}{Standley et~al\mbox{.}}{2019}]%
        {standley2019tasks}
\bibfield{author}{\bibinfo{person}{Trevor Standley}, \bibinfo{person}{Amir~R
  Zamir}, \bibinfo{person}{Dawn Chen}, \bibinfo{person}{Leonidas Guibas},
  \bibinfo{person}{Jitendra Malik}, {and} \bibinfo{person}{Silvio Savarese}.}
  \bibinfo{year}{2019}\natexlab{}.
\newblock \showarticletitle{Which Tasks Should Be Learned Together in
  Multi-task Learning?}
\newblock \bibinfo{journal}{{\em arXiv preprint arXiv:1905.07553\/}}
  (\bibinfo{year}{2019}).
\newblock


\bibitem[\protect\citeauthoryear{Vandenhende, Georgoulis, De~Brabandere, and
  Van~Gool}{Vandenhende et~al\mbox{.}}{2019}]%
        {vandenhende2019branched}
\bibfield{author}{\bibinfo{person}{Simon Vandenhende},
  \bibinfo{person}{Stamatios Georgoulis}, \bibinfo{person}{Bert De~Brabandere},
  {and} \bibinfo{person}{Luc Van~Gool}.} \bibinfo{year}{2019}\natexlab{}.
\newblock \showarticletitle{Branched multi-task networks: deciding what layers
  to share}.
\newblock \bibinfo{journal}{{\em arXiv preprint arXiv:1904.02920\/}}
  (\bibinfo{year}{2019}).
\newblock


\bibitem[\protect\citeauthoryear{Xiao, Rasul, and Vollgraf}{Xiao
  et~al\mbox{.}}{2017}]%
        {xiao2017fashion}
\bibfield{author}{\bibinfo{person}{Han Xiao}, \bibinfo{person}{Kashif Rasul},
  {and} \bibinfo{person}{Roland Vollgraf}.} \bibinfo{year}{2017}\natexlab{}.
\newblock \showarticletitle{Fashion-mnist: a novel image dataset for
  benchmarking machine learning algorithms}.
\newblock \bibinfo{journal}{{\em arXiv preprint arXiv:1708.07747\/}}
  (\bibinfo{year}{2017}).
\newblock


\bibitem[\protect\citeauthoryear{Yu}{Yu}{1974}]%
        {yu1974cone}
\bibfield{author}{\bibinfo{person}{P~Le Yu}.} \bibinfo{year}{1974}\natexlab{}.
\newblock \showarticletitle{Cone convexity, cone extreme points, and
  nondominated solutions in decision problems with multiobjectives}.
\newblock \bibinfo{journal}{{\em Journal of Optimization Theory and
  Applications\/}} \bibinfo{volume}{14}, \bibinfo{number}{3}
  (\bibinfo{year}{1974}), \bibinfo{pages}{319--377}.
\newblock


\bibitem[\protect\citeauthoryear{Yu, Kumar, Gupta, Levine, Hausman, and
  Finn}{Yu et~al\mbox{.}}{2020}]%
        {yu2020gradient}
\bibfield{author}{\bibinfo{person}{Tianhe Yu}, \bibinfo{person}{Saurabh Kumar},
  \bibinfo{person}{Abhishek Gupta}, \bibinfo{person}{Sergey Levine},
  \bibinfo{person}{Karol Hausman}, {and} \bibinfo{person}{Chelsea Finn}.}
  \bibinfo{year}{2020}\natexlab{}.
\newblock \showarticletitle{Gradient surgery for multi-task learning}.
\newblock \bibinfo{journal}{{\em arXiv preprint arXiv:2001.06782\/}}
  (\bibinfo{year}{2020}).
\newblock


\bibitem[\protect\citeauthoryear{Zhang, Bengio, Hardt, Recht, and
  Vinyals}{Zhang et~al\mbox{.}}{2016}]%
        {zhang2016understanding}
\bibfield{author}{\bibinfo{person}{Chiyuan Zhang}, \bibinfo{person}{Samy
  Bengio}, \bibinfo{person}{Moritz Hardt}, \bibinfo{person}{Benjamin Recht},
  {and} \bibinfo{person}{Oriol Vinyals}.} \bibinfo{year}{2016}\natexlab{}.
\newblock \showarticletitle{Understanding deep learning requires rethinking
  generalization}.
\newblock \bibinfo{journal}{{\em arXiv preprint arXiv:1611.03530\/}}
  (\bibinfo{year}{2016}).
\newblock


\bibitem[\protect\citeauthoryear{Zhang, Luo, Loy, and Tang}{Zhang
  et~al\mbox{.}}{2014}]%
        {zhang2014facial}
\bibfield{author}{\bibinfo{person}{Zhanpeng Zhang}, \bibinfo{person}{Ping Luo},
  \bibinfo{person}{Chen~Change Loy}, {and} \bibinfo{person}{Xiaoou Tang}.}
  \bibinfo{year}{2014}\natexlab{}.
\newblock \showarticletitle{Facial landmark detection by deep multi-task
  learning}. In \bibinfo{booktitle}{{\em European conference on computer
  vision}}. Springer, \bibinfo{pages}{94--108}.
\newblock


\bibitem[\protect\citeauthoryear{Zhao, Hong, Wei, Chen, Nath, Andrews,
  Kumthekar, Sathiamoorthy, Yi, and Chi}{Zhao et~al\mbox{.}}{2019}]%
        {zhao2019recommending}
\bibfield{author}{\bibinfo{person}{Zhe Zhao}, \bibinfo{person}{Lichan Hong},
  \bibinfo{person}{Li Wei}, \bibinfo{person}{Jilin Chen},
  \bibinfo{person}{Aniruddh Nath}, \bibinfo{person}{Shawn Andrews},
  \bibinfo{person}{Aditee Kumthekar}, \bibinfo{person}{Maheswaran
  Sathiamoorthy}, \bibinfo{person}{Xinyang Yi}, {and} \bibinfo{person}{Ed
  Chi}.} \bibinfo{year}{2019}\natexlab{}.
\newblock \showarticletitle{Recommending what video to watch next: a multitask
  ranking system}. In \bibinfo{booktitle}{{\em Proceedings of the 13th ACM
  Conference on Recommender Systems}}. \bibinfo{pages}{43--51}.
\newblock


\end{thebibliography}
\bibliographystyle{ACM-Reference-Format}

\newpage
\appendix
\section{Appendix}

\subsection{Proof of Proposition~\ref{prop1}}
\label{sec:proof}

\begin{pro}
\ref{prop1}
Suppose $\L_t(\theta)$ is convex and continuous in $\theta$ for all tasks $t \in \{1,...,T\}$ and $\Theta$ is convex. Then the Pareto frontier of $(\Lhat_1(\theta),..., \Lhat_T(\theta))^\top$ in problem (\ref{eqn:3.1}) is convex.
\end{pro}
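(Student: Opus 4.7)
The plan is to prove convexity of the Pareto frontier by first establishing convexity of a naturally associated set in the objective space $\mathbb{R}^T$, namely the upward closure of the image of $\Theta$ under the loss map. First I would define the feasible objective set $F_0 \coloneqq \{(\Lhat_1(\theta),\ldots,\Lhat_T(\theta))^\top : \theta \in \Theta\}$ and its upward closure $F \coloneqq F_0 + \mathbb{R}^T_{+}$. The Pareto frontier is then the set of minimal elements of $F_0$ under the componentwise partial order, which equals the set of minimal elements of $F$. To say the frontier is convex, in the standard sense used in multi-objective optimization, is to say that for any two Pareto optimal objective vectors $u,v$ and any $\lambda \in [0,1]$, the point $\lambda u + (1-\lambda) v$ lies in $F$ — equivalently, is weakly dominated by some feasible objective vector.

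The key step is to show that $F$ itself is convex. Take any $u,v \in F$, so there exist $\theta_a,\theta_b \in \Theta$ and $r,s \in \mathbb{R}^T_+$ with $u = \Lhat(\theta_a) + r$ and $v = \Lhat(\theta_b) + s$. Since $\Theta$ is convex, $\theta_\lambda \coloneqq \lambda\theta_a + (1-\lambda)\theta_b \in \Theta$, and by convexity of each $\Lhat_t$ (inherited from convexity of $\L_t$, since $\Lhat_t$ is a finite nonnegative-weighted average of such terms),
\begin{equation*}
\Lhat_t(\theta_\lambda) \leq \lambda \Lhat_t(\theta_a) + (1-\lambda)\Lhat_t(\theta_b), \quad \forall t.
\end{equation*}
Hence $\lambda u + (1-\lambda) v - \Lhat(\theta_\lambda) \in \mathbb{R}^T_+$, which exhibits $\lambda u + (1-\lambda) v$ as an element of $F_0 + \mathbb{R}^T_+ = F$. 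Thus $F$ is convex.

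Finally, to conclude convexity of the Pareto frontier itself, I would apply the previous step to two Pareto optimal vectors $u$ and $v$: their convex combination lies in $F$, so there exists a feasible $\theta$ with $\Lhat(\theta) \preceq \lambda u + (1-\lambda) v$ componentwise. This is exactly the statement that the segment between any two frontier points lies in the feasible objective region (up to weak domination), which is the standard meaning of a convex Pareto frontier for a vector optimization problem of this form. The main obstacle is really just a definitional one — being careful to distinguish the set $F$ (which is genuinely convex) from the frontier itself (which is the lower boundary of $F$, and is ``convex'' in the downward-dominance sense just described); the continuity hypothesis on $\L_t$ is not needed for this argument, though it ensures $F_0$ is closed when $\Theta$ is, so that frontier points exist in the usual places.
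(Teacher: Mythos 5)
Your proposal is correct and follows essentially the same route as the paper's own proof: both hinge on the single inequality $\Lhat_t(\lambda\theta_a+(1-\lambda)\theta_b)\leq\lambda\Lhat_t(\theta_a)+(1-\lambda)\Lhat_t(\theta_b)$, which exhibits a feasible objective vector weakly dominating any convex combination of two frontier points. The only differences are cosmetic — you package the argument as convexity of the upward-closed feasible set $F_0+\mathbb{R}^T_{+}$ and work with general $T$ directly, whereas the paper argues geometrically for $T=2$ and remarks that it generalizes; your observation that the continuity hypothesis is not actually used in the argument is also accurate.
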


\begin{proof}
For ease of presentation, we only show the proof for $T=2$. The proof naturally generalizes to $T>2$. Let $P(\theta)=(\Lhat_1(\theta), \Lhat_2(\theta))$ denote the feasible point for $\theta \in \Theta$. By definition of a convex curve, we only need to show that for any two points $P(\theta_1), P(\theta_2)$ on the Pareto frontier, the line connecting them, i.e. $\lambda P(\theta_1) + (1-\lambda)P(\theta_2), \forall \lambda \in [0,1]$ is \textit{above} the Pareto frontier. Because $\L_t(\theta)$ is convex in $\theta$, $\Lhat_t(\theta)$ is also convex in $\theta$, for $t=1,2$. By convexity, we have 
\begin{align}
\begin{split}
\label{eqn:3.2}
& \Lhat_1(\lambda\theta_1 + (1-\lambda)\theta_2) \leq \lambda \Lhat_1(\theta_1) + (1-\lambda)\Lhat_1(\theta_2), \\ 
& \Lhat_2(\lambda\theta_1 + (1-\lambda)\theta_2) \leq \lambda \Lhat_2(\theta_1) + (1-\lambda)\Lhat_2(\theta_2), 
\end{split}
\end{align}

\begin{figure}[hbtp!]
\centering
\includegraphics[width=0.3\textwidth]{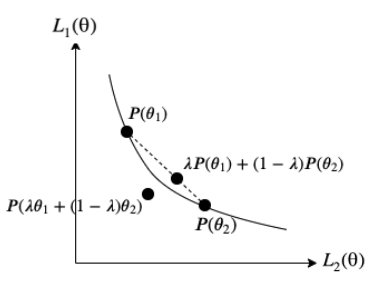}
\caption{Pareto frontier is convex when objectives are convex.}
\label{fig3.1}
\end{figure}

where $\lambda\theta_1 + (1-\lambda)\theta_2 \eqqcolon \theta_0 \in \Theta$ by convexity of $\Theta$. This means that for every point $(\lambda \Lhat_1(\theta_1) + (1-\lambda)\Lhat_1(\theta_2), \lambda \Lhat_2(\theta_1) + (1-\lambda)\Lhat_2(\theta_2)$, there exists a feasible point $(\Lhat_1(\theta_0), \Lhat_2(\theta_0))$ dominating it. As shown in Figure \ref{fig3.1}, this suggests the Pareto frontier between $P(\theta_1)$ and $P(\theta_2)$ is below the line $\lambda P(\theta_1) + (1-\lambda)P(\theta_2), \forall \lambda \in [0,1]$. Therefore the Pareto frontier is convex.
\end{proof}

\subsection{Experiment Details on Synthetic Dataset}
\label{sec:sythetic-dataset-description}

\subsubsection{Details on data generation}
\label{sec:sythetic-dataset-generation}

Inspired by \cite{finn2017model} and \cite{ma2018modeling}, we generate a multi-task dataset and define each task as a regression from the input to the output, with the output being a combination of sine waves. To introduce task conflicts together with task correlation, we let the two tasks share a small subset of frequencies. More specifically, the synthetic dataset is generated as follows:

\begin{enumerate}
	\item[Step 1] Generate the frequency sets used by the two tasks. $W_1 = \{i \in \mathbb{N}: 0 \leq i \leq 29 \text{ or } 50 \leq i \leq 79 \text{ or } 100 \leq i \leq 129 \}$ and $W_2 = \{i \in \mathbb{N}: 25 \leq i \leq 49 \text{ or } 75 \leq i \leq 99 \text{ or } 125 \leq i \leq 149 \}$, so that they have overlapping but mostly different frequencies.
	\item[Step 2] Generate shared inputs. Let input dimension $D=200$ and generate $x_d \sim U[-1/2, 1/2]$ for $1 \leq d \leq D$. 
	\item[Step 3] Generate outputs. Let $x=\sum_{d=1}^D x_d$ and generate $e_1, e_2 \sim N(0,1)$. The labels $y_1$, $y_2$ for the two regression tasks are defined as: 
    	\begin{equation}
        \begin{aligned}
          y_1 &= \sum_{w_1 \in W_1}(w_1 x + 0.2 e_1) \\
          y_2 &= \sum_{w_2 \in W_2}(w_2 x + 0.2 e_2), 
        \end{aligned}
        \end{equation}
    \item[Step 4] Repeat Step 2-3 $n_{\text{train}}=100000$ times to generate training dataset, and $n_{\text{test}}=10000$ times to generate test dataset.
\end{enumerate}

Figure \ref{fig:a.2} shows the shape of the two tasks as a function of $x$. We adopt the shared-bottom model architecture with 2 shared hidden layers of size 250 and 125 each with ReLU activation, with the input as $(x_1, ... ,x_{D})$. For the two task-specific towers for the output $y_1$ and $y_2$, we fix the size of hidden ReLU layers to be 100 and vary the number of hidden layers from 0 to 9. The regression loss is the mean-squared error between the prediction and the true value. The resulting Pareto frontier on the test dataset is shown in Figure \ref{fig:4.2a}.

\begin{figure}[hbtp!]
\centering
\includegraphics[width=0.45\textwidth]{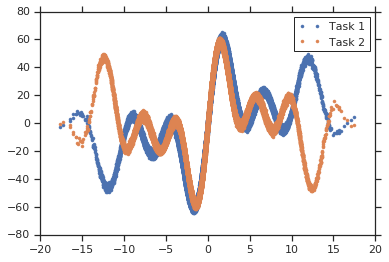}
\caption{An illustration of two tasks as a function of sum of inputs $x=\sum_{d=1}^D x_d$.}
\label{fig:a.2}
\end{figure}

\subsection{Additional experiments on varying network width instead of depth.}
\label{sec:synthetic-width}
In addition to varying depth of the multi-task model as in Figure \ref{fig:4.2a}, we also studied the parameterization effect for multi-task learning models with varying width. On the same dataset in Appendix \ref{sec:sythetic-dataset-generation} and using the same shared-bottom model architecure with 2 shared hidden layers of size 250 and 125 each with ReLU activation, we fix the number of task-specific hidden layers to be 1 and increase the layer width from 100 to 2000. Results in shown in Figure xxx. We draw the same conclusion as in Section \ref{understand_pf} for the parameterization effect and improvements by under-parameterized self-auxiliaries. 

\begin{figure}
    \begin{subfigure}[b]{0.33\textwidth}
        \centering
        \includegraphics[width=\textwidth]{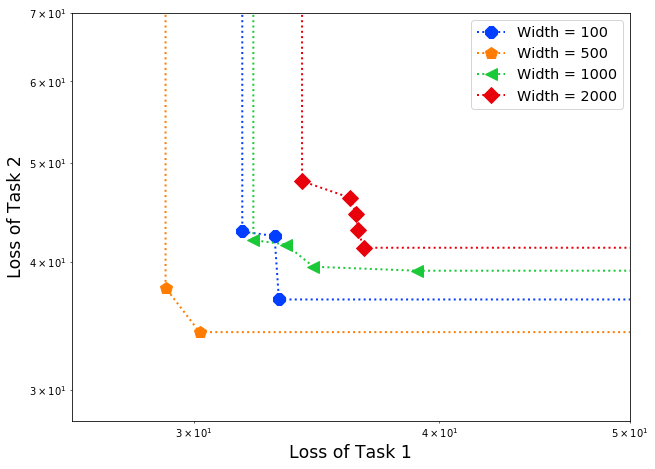}
        \caption{Baseline.}
        \label{fig:a.3a}
    \end{subfigure}
    \centering
    \begin{subfigure}[b]{0.32\textwidth}
        \centering
        \includegraphics[width=\textwidth]{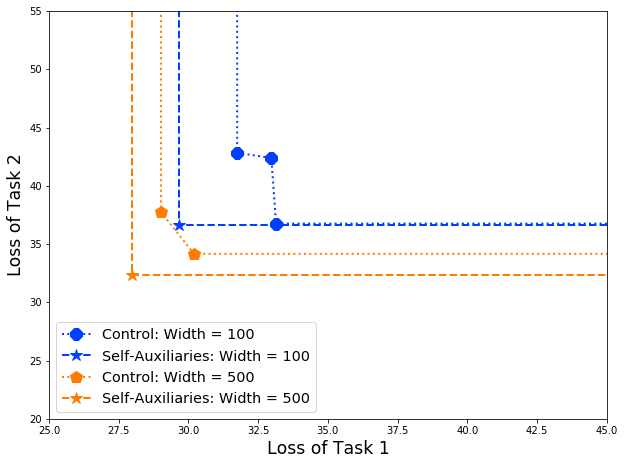}
        \caption{Layer width 100-500.}
        \label{fig:a.3b}
    \end{subfigure}
    \begin{subfigure}[b]{0.32\textwidth}
        \centering
        \includegraphics[width=\textwidth]{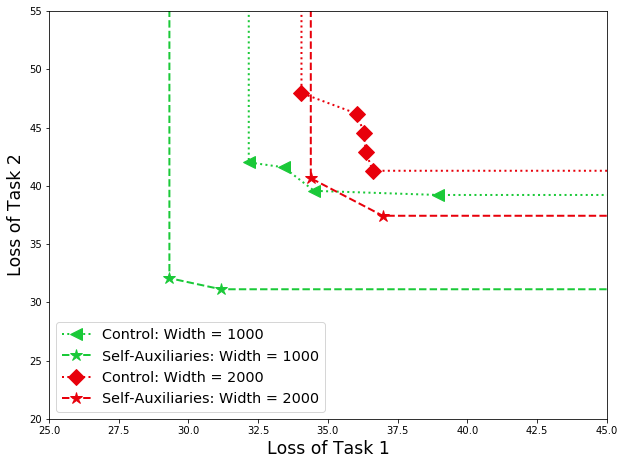}
        \caption{Layer width 1000-2000.}
        \label{fig:a.3c}
    \end{subfigure}
   \caption{Pareto frontiers on synthetic data. (a): Baseline Pareto frontiers with increasing task-specific layer width. The best single task performance across all models is also reported. (b)-(c): Comparison of our method with baseline on different model capacities.}
  \label{figa.3}
\end{figure}

\subsection{Experiment Details on MultiMNIST and MultiFashion}
\label{sec:mnist}

\subsubsection{Model architecture}
Similar to \cite{sener2018multi} and \cite{lin2019pareto}, we construct MultiMNIST and MultiFashion dataset by extending MNIST \citep{lecun1998gradient} and FashionMNIST \citep{xiao2017fashion} to multi-task setups. For each dataset, the multi-task learning problem is to classify the item on the top-left (task 1) and bottom-right (task 2) for each combined image. Figure \ref{figa.3} shows the architecture for the three model architectures used, with increasing number of shared hidden layers and task-specific hidden layers. For each baseline method and each model architecture, the hyperparameters include learning rate and weight for each task if applicable. For self-auxiliaries, the tuning parameters include weight $\gamma$, temperature for the auxiliary tower and width for the bottleneck layer if a bottleneck architecture (Figure \ref{fig:4.3c}) is used. For every method, we first perform 1000 runs for hyperparameters tuning, and then do another 1000 runs on test dataset with the selected hyperparameters to generate the Pareto frontier (Figure \ref{fig5.1}).

\begin{figure}
    \centering
    \begin{subfigure}[b]{0.31\textwidth}
        \centering
        \includegraphics[width=\textwidth]{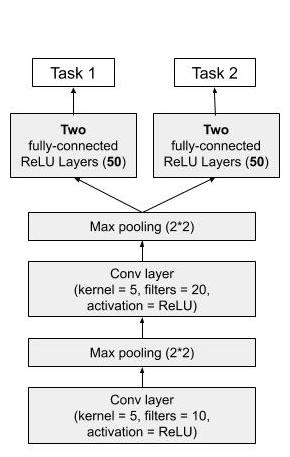}
        \caption{Small model.}
        \label{fig:a.3a}
    \end{subfigure}
    \begin{subfigure}[b]{0.31\textwidth}
        \centering
        \includegraphics[width=\textwidth]{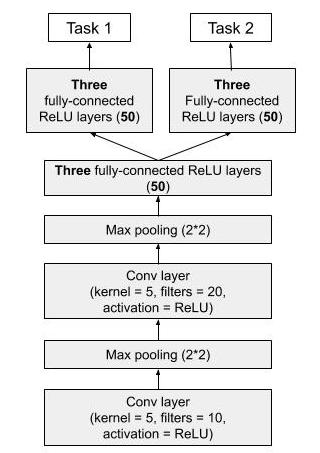}
        \caption{Medium model.}
        \label{fig:a.3b}
    \end{subfigure}
    \begin{subfigure}[b]{0.31\textwidth}
        \centering
        \includegraphics[width=\textwidth]{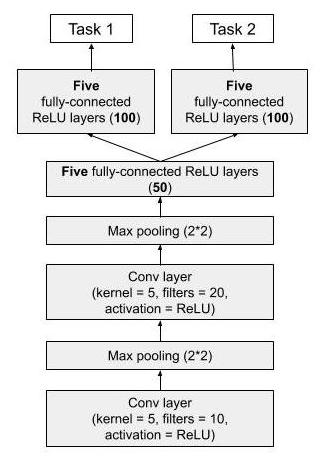}
        \caption{Large model.}
        \label{fig:a.3c}
    \end{subfigure}
   \caption{Model architectures for MultiMNIST and MultiFashion. (a): Small model. (b): Medium model. (c): Large model.}
  \label{figa.3}
\end{figure}

\subsubsection{Numerical results}
In addition to the Pareto frontier reported in Figure \ref{fig5.1}, we also present the numerical results here, by reporting the middle point on the Pareto frontier. Table \ref{tab1} and \ref{tab2} summarize left and right accuracies for different methods on both datasets with different model sizes. Our proposed method of self-auxiliaries achieves on-par performance with the best baseline methods on small models, and outperforms all baseline methods on medium and large models. The results again confirm our observation that the larger the model, the more improvement our method has over baseline methods.

\begin{table}[hbtp]
  \begin{center}
    \begin{tabular}{cc|ccc}
      \hline\hline
      &            & \textbf{Left Accuracy} (\%) & \textbf{Right Accuracy} (\%) \\
      \hline
      \multirow{5}{*}{Small Model}  & MTL                  & 90.95           & 88.92          \\

                            & \textbf{Uncertainty}                  & \textbf{91.56}  & \textbf{89.59}           \\

                            & MGDA-UB                      & 90.09           & 89.09        \\
                            
                            & PCGrad                       & 91.33           & 88.88            \\

                            & Self-Auxiliaries (ours)      & 91.49           & 89.19      \\
                            
      \hline
      \multirow{5}{*}{Medium Model}  & MTL                 & 91.85            & 89.51          \\

                            & Uncertainty                  & 91.86            & 89.92           \\

                            & MGDA-UB                      &  91.44           & 89.62         \\
                            
                            & PCGrad                       & 91.79            & 89.85            \\

                            & \textbf{Self-Auxiliaries (ours)}      & \textbf{92.20}   & \textbf{90.12}      \\
                            
      \hline
      \multirow{5}{*}{Large Model}  & MTL                  & 91.95             & 90.82         \\

                            & Uncertainty                  &  92.45           & 90.68        \\

                            & MGDA-UB                      &  92.45            & 90.67       \\
                            
                            & PCGrad                       &  92.56            & 90.69           \\

                            & \textbf{Self-Auxiliaries (ours)}      & \textbf{92.80}     & \textbf{91.03}      \\

      \hline\hline
    \end{tabular}
    \vspace{1mm}
    \caption{Left and right accuracies for \textbf{MultiMNIST} dataset for different model sizes.\label{tab1}}
  \end{center}
\end{table}

\begin{table}[hbtp]
  \begin{center}
    \begin{tabular}{cc|ccc}
      \hline\hline
      &            & \textbf{Left Accuracy} (\%) & \textbf{Right Accuracy} (\%) \\
      \hline
      \multirow{5}{*}{Small Model}  & MTL                  & 81.26           & 80.96          \\

                            & Uncertainty                  & 82.46          & 81.09           \\

                            & MGDA-UB                      & 81.36           & 81.08        \\
                            
                            & PCGrad                       & 81.42           & 80.94            \\

                            & \textbf{Self-Auxiliaries (ours)}      & \textbf{82.70}           & \textbf{81.19}      \\
                            
      \hline
      \multirow{5}{*}{Medium Model}  & MTL                 & 81.90            & 81.76          \\

                            & Uncertainty                  & 82.83            & 82.25           \\

                            & MGDA-UB                      &  82.38           & 81.97         \\
                            
                            & PCGrad                       & 82.30            & 81.82            \\

                            & \textbf{Self-Auxiliaries (ours)}      & \textbf{83.06}   & \textbf{82.38}      \\
                            
      \hline
      \multirow{5}{*}{Large Model}  & MTL                  &  82.72           & 81.72         \\

                            & Uncertainty                  &  82.99           & 82.25        \\

                            & MGDA-UB                      &  82.68            & 82.06       \\
                            
                            & PCGrad                       &  82.78            & 82.22           \\

                            & \textbf{Self-Auxiliaries (ours)}      & \textbf{83.37}     & \textbf{82.67}      \\

      \hline\hline
    \end{tabular}
    \vspace{1mm}
    \caption{Left and right accuracies for \textbf{MultiFashion} dataset for different model sizes.\label{tab2}}
  \end{center}
\end{table}

\subsubsection{Tuning bottleneck layer for self-auxiliary towers}
When a bottleneck layer (Figure \ref{fig:4.3c}) is adopted as the architecture for self-auxiliaries, the width of the bottleneck layer can be treated as a hyperparameter. This is especially useful for models with moderate sizes so that further reduction of the parameterization for the self-auxiliaries may be desirable. Figure \ref{fig:a.4} shows the Pareto frontier for the self-auxiliaries with different bottleneck sizes on the small model. For medium and large model in our experiments, self-auxiliaries with a single fully-connected layer as in Figure \ref{fig:4.3a} yield the best Pareto frontiers. Therefore no bottleneck layer is needed for those cases.

\begin{figure}
    \centering
    \begin{subfigure}[b]{0.34\textwidth}
        \centering
        \includegraphics[width=\textwidth]{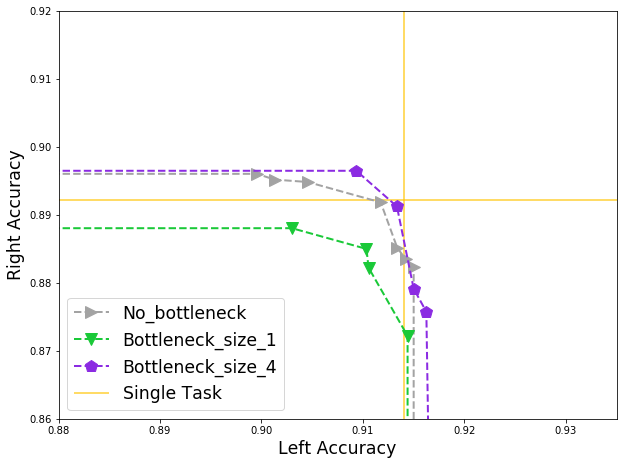}
        \caption{MultiMNIST.}
        \label{fig:a.4a}
    \end{subfigure}
    \hspace{1.5cm}
    \begin{subfigure}[b]{0.34\textwidth}
        \centering
        \includegraphics[width=\textwidth]{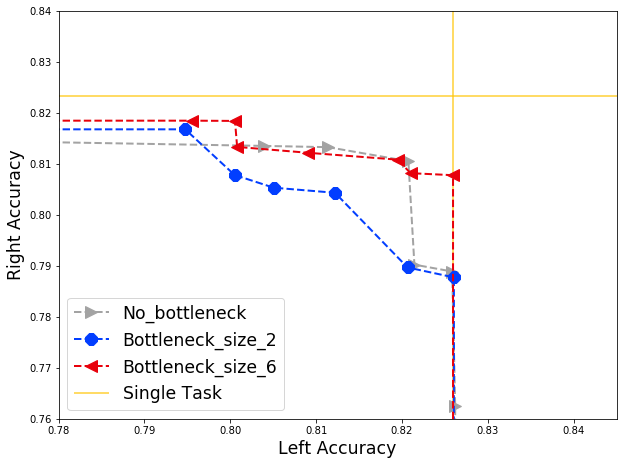}
        \caption{MultiFashion.}
        \label{fig:a.4b}
    \end{subfigure}
   \caption{Effect of bottleneck layer on small models.}
  \label{fig:a.4}
\end{figure}

\subsection{Experiment Details on MovieLens}
\label{sec:movielens} 
We use the MovieLens 1M dataset \citep{harper2015movielens}, which records 1 million ratings (1-5) from 6000 users on 4000 movies. We first augment the dataset by generating negative examples for predicting watch (task 1). We sample user and movie pairs that do not have ratings in the original dataset and treat as negative examples for watch. For every user, the number of un-watched movies is the same as the number of her watched movies. We then sample 1.6 million examples from the augmented dataset as training data, and another 0.2 million examples as test data. The categorical rating values (1-5) are treated as numerical values in prediction rating as a regression problem (task 2).

We adopt a shared-bottom model architecture with shared layers and task-specific layers. Each layer is of size 200 with ReLU activation. For each baseline method, we perform 1000 runs to search for the best learning rate; for our method of self-auxiliaries, we perform 1000 runs to search for the best combination of learning rate, self-auxiliary weight $\gamma$, and self-auxiliary pool length (Figure \ref{fig:4.3b}). In addition to Figure \ref{fig:5.2a}, we also present numerical results in Table \ref{tab3} by reporting the middle point on the Pareto frontier. We can see that with average pooling, our method is able to significantly improve the performance on both tasks compared with baseline methods.

\begin{table}[hbtp]
  \begin{center}
    \begin{tabular}{cc|ccc}
      \hline\hline
      &            & \textbf{Watch Pred. Error Rate} & \textbf{Rating Pred. MSE}\\
      \hline
                            & MTL                          & 0.172           & 0.387          \\
  
                            & Uncertainty                  & 0.165          & 0.399          \\

                            & MGDA-UB                      & 0.168           & 0.385        \\
                            
                            & PCGrad                       & 0.167           & 0.397            \\
                            
                            & Self-Auxiliaries             & 0.168           & 0.385    \\

                            & \textbf{Self-Auxiliaries w/ pooling}      & \textbf{0.161}           & \textbf{0.377}      \\

      \hline\hline
    \end{tabular}
    \vspace{1mm}
    \caption{Numerical results on MovieLens dataset.\label{tab3}}
  \end{center}
\end{table}

\subsection{Additional Experiments}
\label{sec:additional-experiments} 
\subsubsection{Ablation study on the need of "small" vs. "big" towers.}
We also conduct experiments on self-auxiliaries that are not under-parameterized and intuitively introduces less regularization benefits to the learning of shared representation. In particular, we experiment with high-capacity auxiliaries with size identical to that of the original task towers. At inference time, the "big" auxiliary towers are abandoned as before. Figure \ref{figa.4} shows high-capacity self-auxiliaries don't outperform under-parameterized self-auxiliaries on all settings, which confirms the need of "small" towers.

\begin{figure}
    \centering
    \begin{subfigure}[b]{0.315\textwidth}
        \centering
        \includegraphics[width=\textwidth]{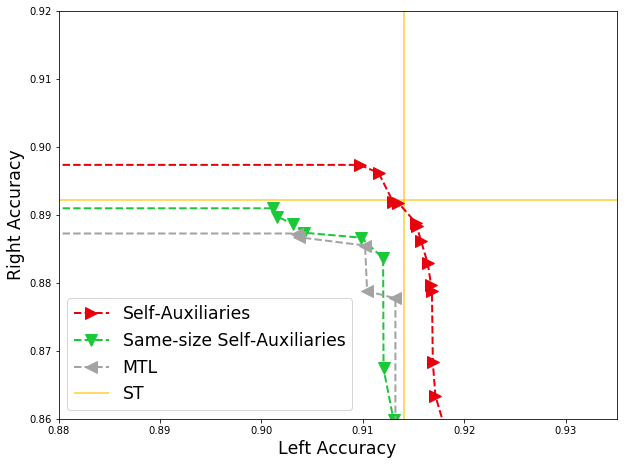}
        \caption{MultiMNIST small model.}
        \label{fig:a.4a}
    \end{subfigure}
    \begin{subfigure}[b]{0.32\textwidth}
        \centering
        \includegraphics[width=\textwidth]{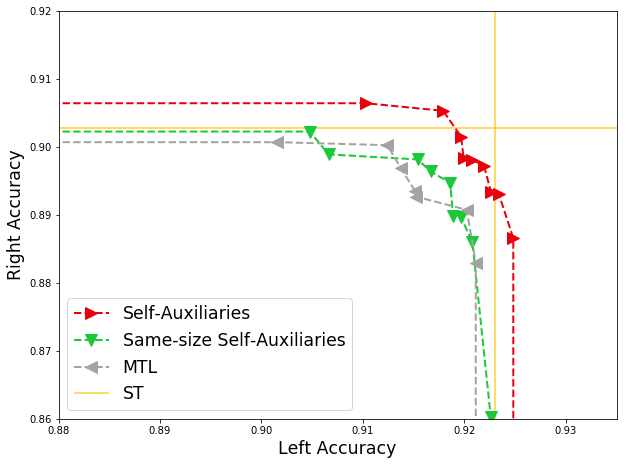}
        \caption{MultiMNIST medium model.}
        \label{fig:a.4b}
    \end{subfigure}
    \begin{subfigure}[b]{0.32\textwidth}
        \centering
        \includegraphics[width=\textwidth]{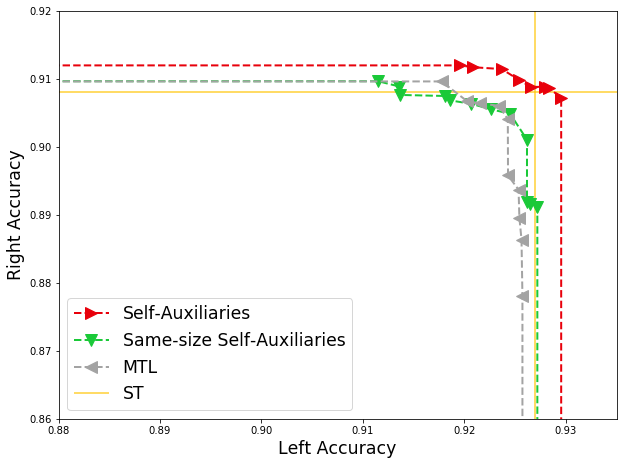}
        \caption{MultiMNIST large model.}
        \label{fig:a.4c}
    \end{subfigure}
    \begin{subfigure}[b]{0.32\textwidth}
        \centering
        \includegraphics[width=\textwidth]{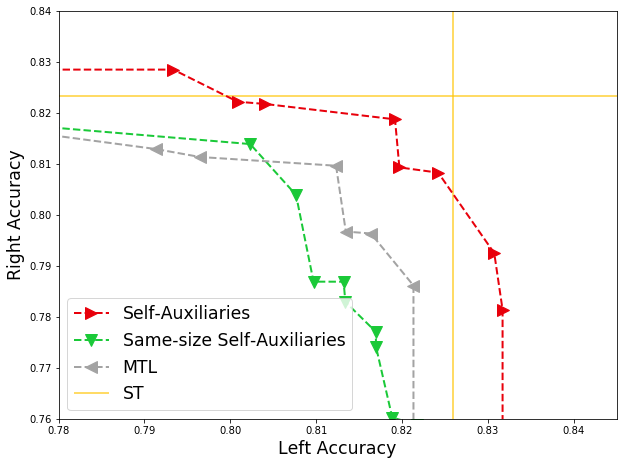}
        \caption{MultiFashion small model.}
        \label{fig:a.4d}
    \end{subfigure}
    \begin{subfigure}[b]{0.32\textwidth}
        \centering
        \includegraphics[width=\textwidth]{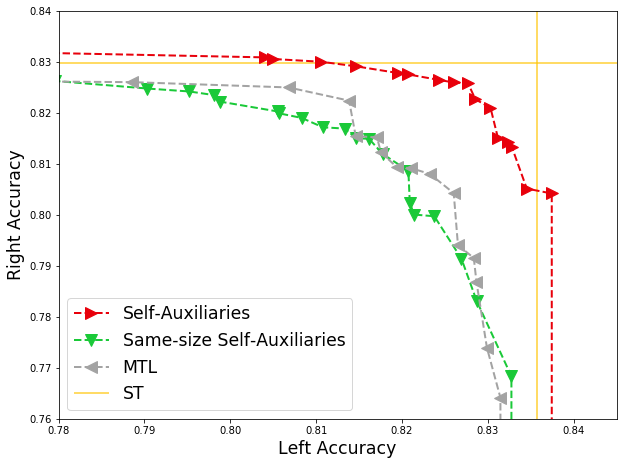}
        \caption{MultiFashion medium model.}
        \label{fig:a.4e}
    \end{subfigure}
    \begin{subfigure}[b]{0.32\textwidth}
        \centering
        \includegraphics[width=\textwidth]{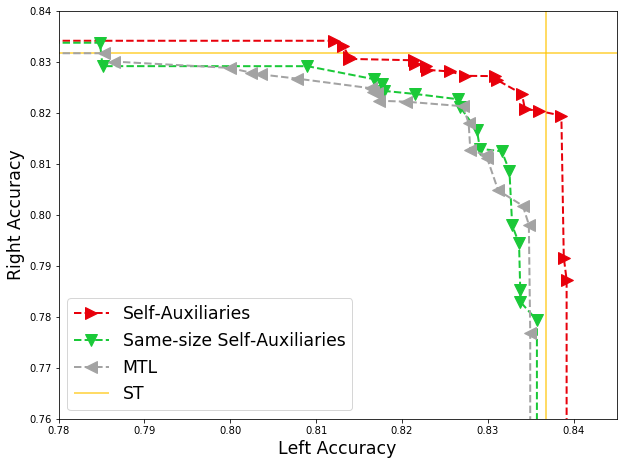}
        \caption{MultiFashion large model.}
        \label{fig:a.4f}
    \end{subfigure}
   \caption{Experiment results on MultiMNIST/MultiFashion datasets with high-capacity auxiliaries with size identical to that of the original task towers (green).}
  \label{figa.4}
\end{figure}

\subsubsection{Self-auxiliary effect on single-task learning.}
To confirm that the benefit of under-parameterized self-auxiliaries comes from better \textit{multi-task} generalization, we further conduct experiments on single-task learning settings. More specifically, we remove one task in the MultiMNIST and MultiFashion dataset and only predicts left or right item as two single task learning problems. Self-auxiliaries are applied at the same layer as in the multi-task case. The results are summarized in Table \ref{tab4} and \ref{tab5}. We also conduct experiments on the regular MNIST dataset with self-auxiliaries (Table \ref{tab6}). We observe that with self-auxiliaries, there is a minor improvement on the small models, but not on medium and large models. Compared with the results on multi-task models in Section \ref{sec:5.1} where self-auxiliaries have larger advantage on larger models, this confirms that our proposed method improves the performance by improving \textit{multi-task} generalization, instead of regularizing each tasks separately.

\begin{table}[hbtp]
  \begin{center}
    \begin{tabular}{c|c|cc}
      \hline\hline
      &            & \textbf{Left Accuracy} (\%) & \textbf{Right Accuracy} (\%) \\
      \hline
      \multirow{3}{*}{Small Model}  & Single Task (ST)  & 91.16           & 88.61          \\

                            & Self-Auxiliaries (ours)      & 91.49           & 89.61      \\
                            \cline{2-4}
                            
                            & \textbf{Difference} & \textbf{+0.58\%} & \textbf{+1.12\%}        \\
                            
      \hline
      \multirow{3}{*}{Medium Model}  & Single Task (ST)  & 91.62           & 89.62          \\

                            & Self-Auxiliaries (ours)      & 91.56           & 89.79      \\
                            \cline{2-4}
                            
                            & \textbf{Difference} & \textbf{-0.07\%} & \textbf{+0.19\%}        \\
                            
      \hline
     \multirow{3}{*}{Large Model}  & Single Task (ST)  & 92.71           & 91.01          \\

                            & Self-Auxiliaries (ours)      & 92.88           & 91.04      \\
                            \cline{2-4}
                            
                            & \textbf{Difference} & \textbf{+0.19\%} & \textbf{+0.03\%}        \\

      \hline\hline
    \end{tabular}
    \vspace{1mm}
    \caption{Self-auxiliaries with single-task learning on \textbf{MultiMNIST} dataset. \label{tab4}}
  \end{center}
\end{table}

\begin{table}[hbtp]
  \begin{center}
    \begin{tabular}{c|c|cc}
      \hline\hline
      &            & \textbf{Left Accuracy} (\%) & \textbf{Right Accuracy} (\%) \\
      \hline
      \multirow{3}{*}{Small Model}  & Single Task (ST)  & 82.14           & 81.95          \\

                            & Self-Auxiliaries (ours)      & 82.91           & 82.70      \\
                            \cline{2-4}
                            
                            & \textbf{Difference} & \textbf{+0.93\%} & \textbf{+0.92\%}        \\
                            
      \hline
      \multirow{3}{*}{Medium Model}  & Single Task (ST)  & 83.13           & 82.97          \\

                            & Self-Auxiliaries (ours)      & 83.10           & 83.04      \\
                            \cline{2-4}
                            
                            & \textbf{Difference} & \textbf{-0.05\%} & \textbf{+0.08\%}        \\
                            
      \hline
     \multirow{3}{*}{Large Model}  & Single Task (ST)  & 83.55           & 82.89         \\

                            & Self-Auxiliaries (ours)      & 83.50           & 83.10      \\
                            \cline{2-4}
                            
                            & \textbf{Difference} & \textbf{-0.07\%} & \textbf{+0.25\%}        \\

      \hline\hline
    \end{tabular}
    \vspace{1mm}
    \caption{Self-auxiliaries with single-task learning on \textbf{MultiFashion} dataset. \label{tab5}}
  \end{center}
\end{table}

\begin{table}[hbtp]
  \begin{center}
    \begin{tabular}{c|ccc}
      \hline\hline
               & Small Model & Medium Model  & Large Model    \\
      \hline
      Single-Task  & 0.9833 $\pm$ 0.0011  & 0.9911 $\pm$ 0.0009   & 0.9918 $\pm$ 0.0009  \\

     Single-Task + Self-Auxiliaries   & 0.9888 $\pm$ 0.0007     & 0.9910 $\pm$ 0.0031  & 0.9920 $\pm$ 0.0011      \\
                            \cline{2-4}
      \hline\hline
    \end{tabular}
    \vspace{1mm}
    \caption{Self-auxiliaries with single-task learning on \textbf{MNIST} dataset. Performances are measured in 95\% confidence intervals for classification accuracy. \label{tab6}}
  \end{center}
\end{table}

\end{document}